\documentclass{article}

    \PassOptionsToPackage{numbers, compress, sort}{natbib}

\usepackage[preprint]{neurips_2026}

\usepackage[utf8]{inputenc} 
\usepackage[T1]{fontenc}    
\usepackage{hyperref}       
\usepackage{url}            
\usepackage{booktabs}       
\usepackage{amsfonts}       
\usepackage{nicefrac}       
\usepackage{microtype}      
\usepackage[table]{xcolor}         

\usepackage{amsmath, amssymb, amsthm, verbatim, mathtools}
\usepackage{graphicx}
\usepackage{appendix}
\usepackage{subcaption}
\usepackage{enumitem}
\usepackage{wrapfig}
\usepackage{algorithm}
\usepackage{algpseudocode} 
\usepackage{multirow}
\usepackage{tabularx}
\newtheorem{innercustomgeneric}{\customgenericname}
\providecommand{\customgenericname}{}

\usepackage[disable]{todonotes} 

\usepackage{pifont}

\DeclareMathOperator*{\Fix}{\operatorname{Fix}}

\newtheorem{assumption}{Assumption}
\newtheorem{theorem}{Theorem}
\newtheorem*{theorem*}{Theorem}
\newtheorem{definition}{Definition}
\newtheorem*{definition*}{Definition}

\newtheorem*{corollary*}{Corollary}

\newtheorem*{remark}{Remark}

\title{Root-Selecting Fixed-Point Inversion for Rectified Flows via Trajectory Straightness}

\author{%
    \textbf{Semin Kim ~~~~\ Jihwan Yoon ~~~~\ Seunghoon Hong} \\
    KAIST \\
    \small{
    \texttt{\{seminkim, jihwan.yoon, seunghoon.hong\}@kaist.ac.kr}
    }
}

\begin{document}

\maketitle

\begin{abstract}

{
Finding the initial noise that generates a given data sample, known as inversion, is a key component for downstream applications such as training-free image editing.
Existing fixed-point inversion methods improve inversion accuracy by formulating each inversion step as a fixed-point problem, but they lack a principled mechanism for selecting among multiple fixed-point solutions that can arise in practice.
We observe that different selections induce different inversion trajectories, leading to substantial variation in reconstruction and editing quality.
For rectified flows, we further find that this variation is closely associated with trajectory straightness, motivating straightness as a principled selection criterion.
We propose SelFix, a fixed-point inversion method that selects fixed-point solutions inducing straighter inverse trajectories while retaining convergence to an exact inverse root under standard local assumptions.
Experiments on FLUX.1-dev and PIE-Bench show that SelFix improves fixed-point inversion, achieving stronger real-image reconstruction and better source-preserving prompt-based editing than prior inversion baselines.
The code is available at \url{https://github.com/seminkim/selfix}.}

\end{abstract}

\section{Introduction}
Recent advances in text-to-image generative models have enabled diverse, high-quality image generation, with leading backbones increasingly shifting from diffusion models~\cite{rombach2022high-ldm, ramesh2022hierarchical-dalle2, podell2024sdxl} to rectified flows~\cite{liu2022rectifiedflow, BlackForestLabs2024FLUX1, esser2024sd3, flux-2-2025}.
Accordingly, accurate inversion of rectified flows, which maps a real image back to an initial noise, has recently received increasing attention~\cite{deng2025fireflow, jiao2026unieditflow, wang2025taming-rfsolver, wang2026freelunch}.
Accurate inversion enables various applications that reuse the pretrained generative backbone, such as \emph{training-free} prompt-based image editing~\cite{mokady2023nulltext, wang2025taming-rfsolver, brack2024ledits, hertz2023prompt, pix2pix, masactrl}.

{A central difficulty in inversion is discretization error.}
{Naively integrating the learned dynamics in the reverse direction does not exactly invert the finite-step solver, especially when the learned trajectory is curved or the solver uses only a small number of steps.}
{To reduce this error,} previous methods often change the conditioning branch~\cite{miyake2023negative}, optimize auxiliary variables~\cite{dong2023pti,mokady2023nulltext}, or modify the sampler to couple inversion and reconstruction streams~\cite{wallace2023edict}.
Among the various approaches, fixed-point inversion methods~\cite{pan2023aidi, garibi2024renoise, samuellightning} have shown promise for more accurate inversion.
Fixed-point inversion provides an alternative, principled approach; it {keeps the chosen discrete solver fixed and} formulates each inversion step as a local inverse equation {induced by that solver step} and solves it iteratively.
When the iteration converges, the recovered point is an exact inverse of the chosen discrete forward solver, without requiring auxiliary-variable optimization or changes to the sampling procedure.

Existing fixed-point inversion methods focus on finding {a single} valid inverse point efficiently~\cite{pan2023aidi, samuellightning}.
{Finding any single root is sufficient when the local inverse equation has a unique solution, or when all solutions lead to similar downstream behavior.}
{In practice, however, we observe that the learned and discretized local inverse equation can admit multiple distinct approximate fixed-point roots.}
{In this multi-root regime, achieving a small fixed-point residual alone is not enough.}
{Different candidates with comparable local residuals can lead to substantially different inversion trajectories, affecting downstream reconstruction or editing quality.}
{We therefore view fixed-point inversion not only as solving a local inverse equation, but also as selecting a desirable solution among multiple valid candidates, as illustrated in Fig.~\ref{fig:overview}.}

\begin{figure}[t]
    \centering
    \includegraphics[width=\linewidth]{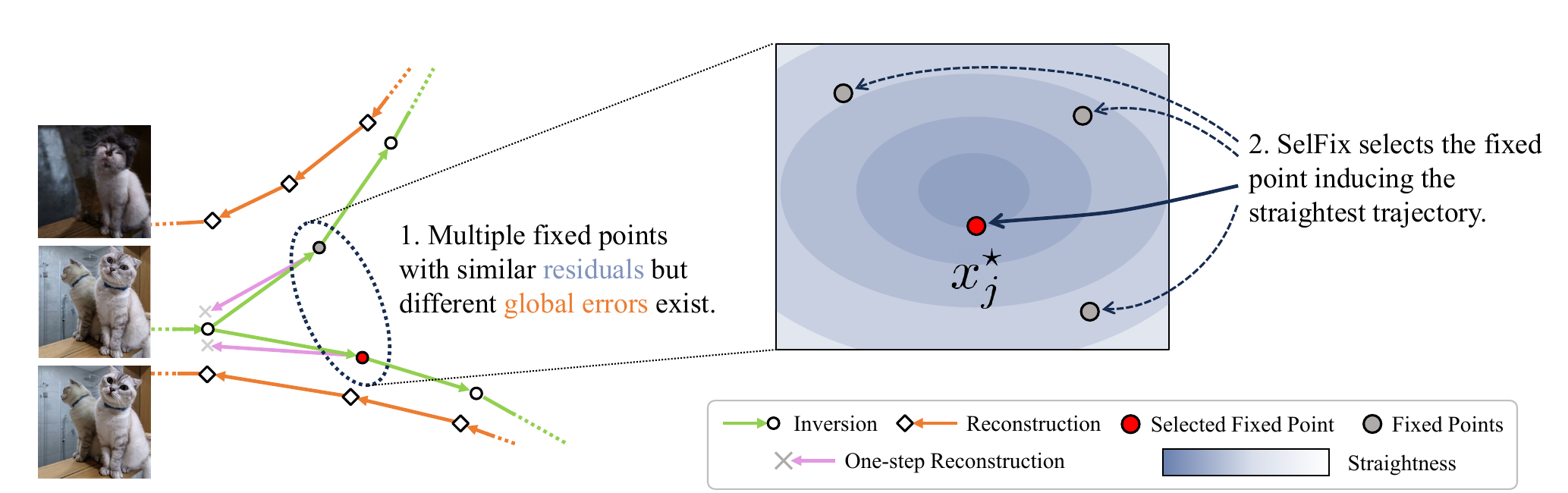}
    \caption{
    \textbf{Overview of SelFix.}
    Fixed-point inversion may produce multiple approximate fixed points with similar residuals but different global reconstruction errors.
    SelFix improves fixed-point iteration by selecting the fixed point inducing the straightest inverse trajectory, avoiding inversion failures caused by curved trajectories that are more prone to error accumulation.
    }
    \label{fig:overview}
\end{figure}

{For rectified flows in particular, a natural criterion for the selection problem is trajectory straightness.}
{Rectified flows are designed so that straighter trajectories are easier to simulate with coarse discrete solvers~\cite{liu2022rectifiedflow}, whereas curved trajectories make local solver approximations less reliable.}
{Building on our empirical observation that reconstruction error is closely associated with inversion-path straightness,} we propose \textbf{SelFix}, which \textbf{sel}ects {a} \textbf{fix}ed point that induces a straighter inversion trajectory{, thereby improving rectified flow inversion}.
Viewing inversion as the selection of a particular fixed point, we derive a proxy for {rectified flow's trajectory straightness}~\cite{liu2022rectifiedflow} and use it as a criterion for selecting a root.
Then, we design an iterative algorithm based on Halpern iteration~\cite{halpern1967fixed}, which blends the original fixed-point map with a straightness anchor derived from this criterion.
We also prove that under standard assumptions, the proposed iteration converges to an exact fixed-point inverse that minimizes the straightness selector over the fixed-point set.

Our contributions are threefold.
{First, we formulate fixed-point inversion as root selection over the exact local inverse set, motivated by the observation that multiple approximate roots can have similar local residuals but substantially different global reconstruction errors.}
{Second, we introduce a straightness-guided selector derived from rectified-flow trajectory straightness~\cite{liu2022rectifiedflow}.}
{Third, we propose a theoretically grounded anchored fixed-point algorithm that selects a straighter inverse root while preserving exact fixed-point inversion asymptotically.}
Experiments on real-image reconstruction and prompt-based editing show that our method produces straighter inverse trajectories, yielding lower reconstruction error and better background preservation in image editing.

\section{Preliminaries}
\label{sec:prelim}

\subsection{Rectified Flow}
\label{subsec:prelim_rf}

Let $X_0\sim p_{\text{data}}$ and $X_1\sim p_{\text{noise}}$, where $p_{\text{noise}}$ is typically a standard Gaussian distribution.
Given a pair $(X_0, X_1)$, rectified flow~\cite{liu2022rectifiedflow} learns a time-dependent velocity field $u_\theta:\mathbb{R}^d\times[0,1]\to\mathbb{R}^d$ by matching the velocity of the linear interpolation
\begin{equation} 
X_t=(1-t)X_0+tX_1.
\label{eq:interpolation}
\end{equation}
Since $\dot X_t = X_1-X_0$, the standard conditional flow matching objective is
\begin{equation}
    \mathcal{L}_{\mathrm{CFM}}(\theta)
    =
    \mathbb{E}_{t,X_0,X_1}
    \left[
        \left\|
        u_\theta(X_t,t) - (X_1-X_0)
        \right\|^2
    \right],
    \label{eq:cfm_objective}
\end{equation}
where $t\sim \text{Unif}[0,1]$. 
The learned velocity field induces the probability-flow ODE
\begin{equation}
    \frac{d x_t}{dt}=u_\theta(x_t,t).
    \label{eq:model_ode}
\end{equation}
Data can be generated by numerically integrating Eq.~\eqref{eq:model_ode} backward in time.
Specifically, let $1=t_T>t_{T-1}>\cdots>t_0=0$ be a discrete time grid, and let $x_j$ denote the state at time $t_j$. 
Given a one-step numerical solver $f_{t_j \to t_{j-1}}$, generation amounts to the recurrence $x_{j-1}=f_{t_j\to t_{j-1}}(x_j)$, starting from random noise $x_T$ and ending at the generated sample $x_0$.
For the Euler solver in particular,
\begin{equation}
    \label{eq:euler_gen}
    x_{j-1} = f_{t_j\to t_{j-1}}(x_j)  = x_j - h_ju_\theta(x_j,t_j), ~~~~h_j=t_{j}-t_{j-1}.
\end{equation}

A central property of rectified flow is that straight trajectories are easier to integrate with a small number of solver steps. 
For a continuous path $Z=\{Z_t\}_{t\in[0,1]}$, the trajectory straightness is commonly measured by:
\begin{equation}
    \label{eq:rf_straightness}
    S(Z) = \int_0^1 \mathbb{E}
    \left[
    ||  \dot Z_t - (Z_1-Z_0) || ^2
    \right] dt,
\end{equation}
where $S(Z)=0$ indicates perfectly straightened paths~\cite{liu2022rectifiedflow}.
In practice, however, the learned velocity field may induce curved trajectories {both} due to {averaging over many noise--data pairs and to} model approximation error.
Generation along such curved trajectories induces discretization error for solvers such as Euler, which rely on a constant-velocity approximation over each interval.

\subsection{Inversion}
\label{subsec:prelim_inversion}

Inversion is the problem of finding initial noise $x_T$ that generates the given real data $x_0$.
This is typically done by iteratively recovering the generation trajectory, starting from the data $x_0$.
At $j$-th iteration, the goal is to recover a state $x_j$, given the previously obtained states $x_{<j}$.
With the definition of residual at time $t_j$ as
\begin{equation}
    r_j(x)
    :=
    f_{t_j\to t_{j-1}}(x)-x_{j-1},
    \label{eq:inversion_residual}
\end{equation}
the goal for each inversion step is to find $x$ that satisfies zero residual:
\begin{equation}
    r_j(x)=0,
    \qquad\text{equivalently}\qquad
    f_{t_j\to t_{j-1}}(x)=x_{j-1}.
    \label{eq:exact_local_inversion}
\end{equation}
Solving Eq.~\eqref{eq:exact_local_inversion} is generally challenging, since it requires inverting the solver $f$ which itself depends on the learned velocity $u_\theta$.
The simplest strategy is to integrate Eq.~\eqref{eq:model_ode} in the inverse direction with the same solver.
For instance, for Euler solver, this gives $\hat x_{{j}} = x_{{j-1}} + h_j u_\theta(x_{{j-1}},t_{j-1})$.
This estimate is exact only in special cases, for example when the velocity remains constant across the steps, \emph{i.e.}, $u_\theta(x_{{j-1}},t_{j-1}) = u_\theta(x_{{j}},t_{j})$.
In general, however, this assumption does not hold due to the non-straightness in the learned velocity (Sec.~\ref{subsec:prelim_rf}), incurring errors in the inversion process.

\subsection{Fixed-Point Inversion}
\label{subsec:prelim_fpi}

Fixed-point inversion improves upon direct reverse integration by explicitly solving the local inverse equation in Eq.~\eqref{eq:exact_local_inversion}.
Define the fixed-point map
\begin{equation}
    P_j(x)
    :=
    x-r_j(x).
    \label{eq:fixed_point_map}
\end{equation}
Then being the root of the fixed-point equation $x=P_j(x)$ is equivalent to being the root of $r_j(x)=0$.
Hence, the zero set of $r_j$ is the same as the fixed-point set of $P_j$:
\begin{equation}
    S_j
    =
    \{x:r_j(x)=0\}
    =
    \Fix(P_j),
    \qquad
    \operatorname{Fix}(P_j):=\{x:P_j(x)=x\}.
    \label{eq:root_fix_equivalence}
\end{equation}
Then the inversion can be done by running a fixed-point solving algorithm on $P_j$.
One basic algorithm is applying Picard iteration:
\begin{equation}
    x_j^{k+1}=P_j(x_j^k).
    \label{eq:naive_fpi}
\end{equation}
If Eq.~\eqref{eq:naive_fpi} converges to some $x_j^\star$, then by construction $r_j(x^\star)=0$, so the per-step inversion is solved exactly.
Thus, if converged, fixed-point inversion gives exact inversion asymptotically, even with a coarse timestep discretization which induces large discretization error.

While empirically effective in practice, prior fixed-point inversion methods solve Eq.~\eqref{eq:root_fix_equivalence} for obtaining any single element of $S_j$.
When $\Fix(P_j)$ contains multiple roots, Eq.~\eqref{eq:naive_fpi} does not specify which exact inverse root should be preferred.
In such case, the selected limit simply depends on initialization or the local geometry of $P_j$.
In the following section, we demonstrate that the learned model may admit multiple fixed points for Eq.~\eqref{eq:fixed_point_map}, and selecting a fixed point that induces straighter trajectory can improve the inversion.

\section{Method}
\label{sec:method}

\subsection{Fixed-Point Inversion as Root Selection}
\label{sec:root-selection}

Existing fixed-point inversion methods seek an arbitrary element of $S_j$, which is sufficient when $S_j$ is a singleton.
When multiple roots exist, however, different choices of $x_j\in S_j$ can induce different inverse trajectories, even though they satisfy the same local reconstruction equation.
Thus, we view fixed-point inversion not only as root finding, but also as root selection.

For the local convergence analysis, we restrict attention to a region of interest {$C_j\subset\mathbb{R}^d$} where the fixed-point iteration is expected to operate, and define the local exact inverse root set
\begin{equation}
    S_j^C
    :=
    S_j\cap C_j
    =
    \operatorname{Fix}(P_j)\cap C_j.
    \label{eq:local_root_set}
\end{equation}

We then formulate root-selecting inversion as
\begin{equation}
    x_j^\star
    \in
    \arg\min_{x\in S_j^C}
    \phi_j(x),
    \label{eq:root_selection_general}
\end{equation}
where $\phi_j$ is a selection criterion.
This formulation separates exactness from selection: the constraint $x\in S_j^C$ enforces exact local inversion, while $\phi_j$ specifies which exact root is preferred.

\paragraph{Empirical Existence of Multiple Fixed-Points}
\label{par:empirical_multiroot}

\begin{wrapfigure}{r}{0.33\textwidth}
    \vspace{-1em}
    \centering
    \includegraphics[width=\linewidth]{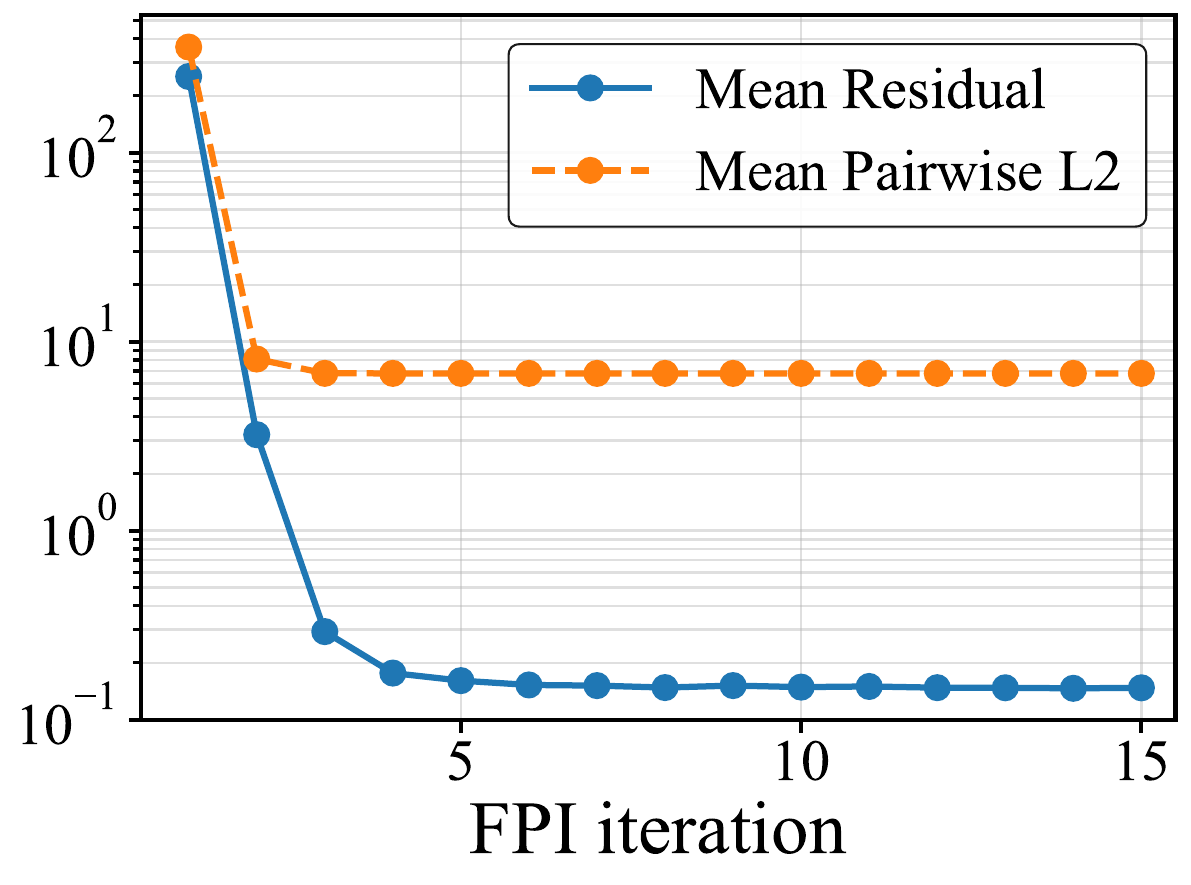}
    \caption{
    Comparison of average residual and average pairwise distance between different initializations.
    Details are in App.~\ref{app:multiroot-details}.
    }
    \label{fig:multiroot}
\end{wrapfigure}

To show that the learned discrete inversion equation can admit multiple approximate fixed-point roots in practice, we analyze fixed-point inversion under different initializations.
Specifically, we repeat fixed-point inversion on the same image with different random initializations{, and track two quantities during the inner fixed-point iteration. The first is the mean fixed-point residual, computed as the average of $\|P_j(x)-x\|_2$ over differently initialized iterates. The second is the mean pairwise $L_2$ distance, computed between the iterates obtained from different initializations at the same timestep and inner iteration.}
As shown in Fig.~\ref{fig:multiroot}, even near convergence where {the mean fixed-point} residual {is} small, differently initialized {iterates} tend to remain far from each other; their {mean pairwise} distance remains an order of magnitude larger than the mean {fixed-point} residual.
{Because the residual is small but the pairwise distance remains large, the converged iterates behave as distinct near-roots rather than variants of a single solution.}
This observation suggests that the learned discrete inversion equation can have multiple distinct approximate fixed-point roots in practice.
We emphasize, however, that this multiplicity claim concerns the learned and discretized local inversion equation in Eq.~\eqref{eq:root_fix_equivalence}; it does not claim non-injectivity of the ideal continuous-time ODE flow.

\subsection{Straightness Selector}
\label{subsec:windowed_straightness}
Our root-selection view in Sec.~\ref{sec:root-selection} allows many possible choices of the root selector $\phi_j$.
To design a practical selector that improves the inversion, we first conjecture and validate that {trajectory straightness is highly correlated with the} failure of fixed-point inversion, motivating a straightness-based root selector.
\paragraph{Straightness and Inversion Failure}

\begin{figure}[t]
  \centering
  \begin{subfigure}[b]{0.35\linewidth}
    \centering
    \includegraphics[width=\linewidth]{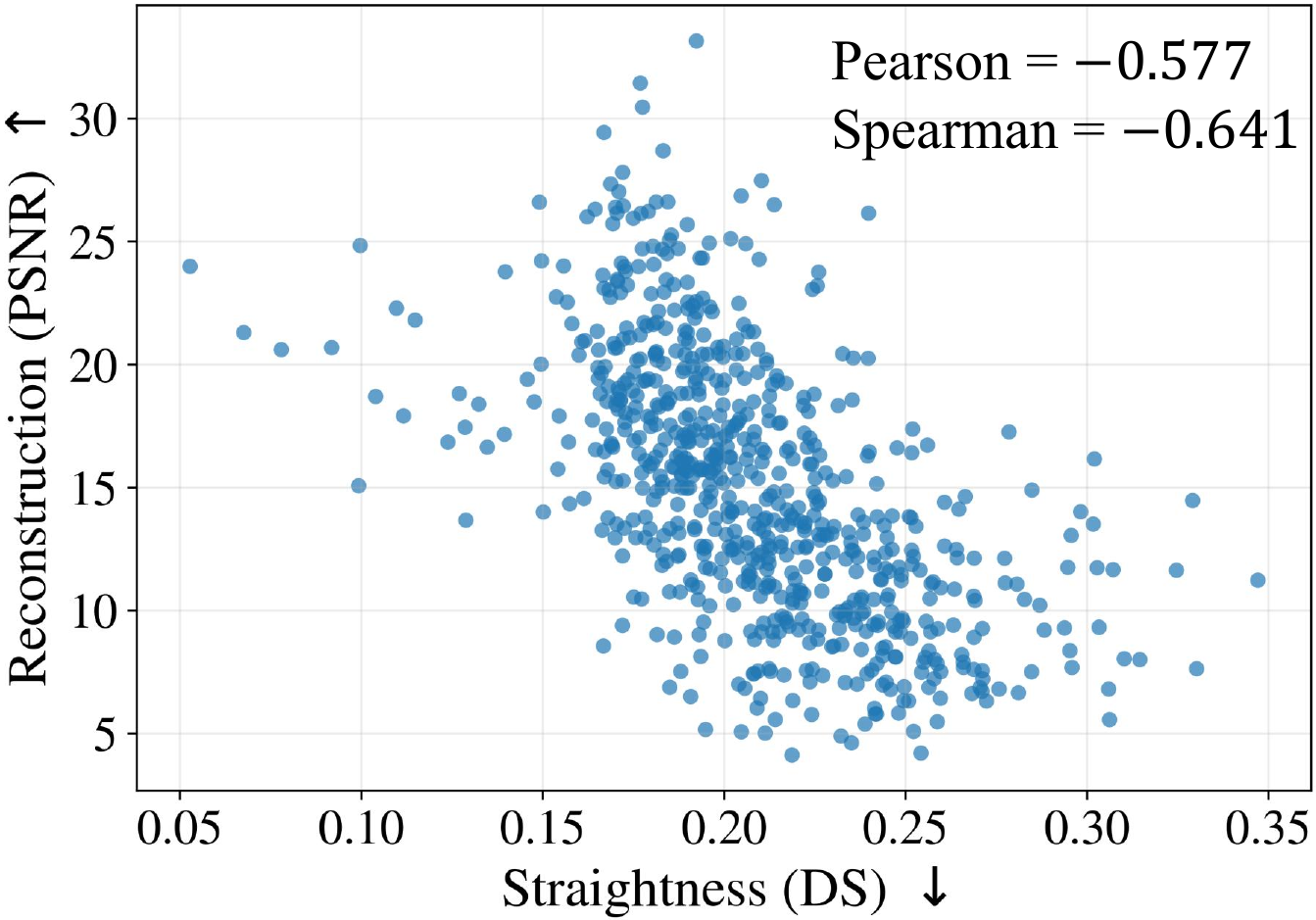}
    \subcaption{
    Reconstruction PSNR versus discrete trajectory straightness.
    }
    \label{fig:recon-straightness-correlation}
  \end{subfigure}
  \hfill
  \begin{subfigure}[b]{0.63\linewidth}
    \centering
    \includegraphics[width=\linewidth]{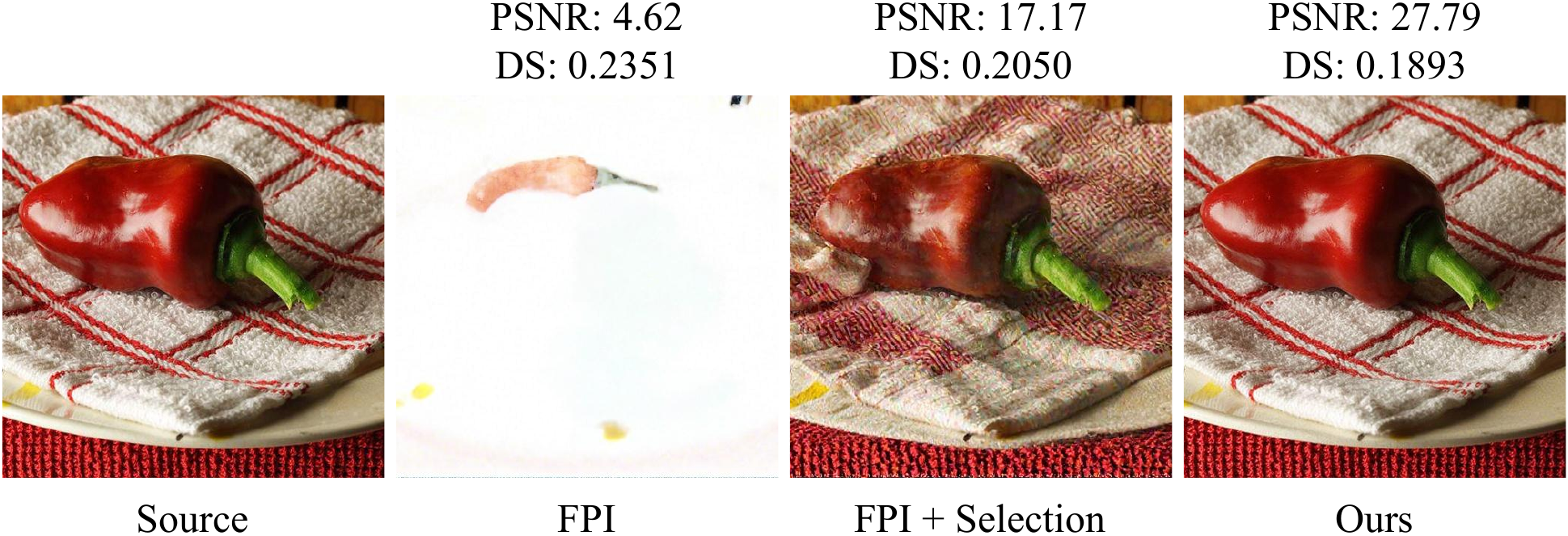}
    \subcaption{
    An image reconstruction result using discrete straightness (DS) as a selection criterion.
    }
    \label{fig:recon-straightness-vis}
  \end{subfigure} 
  \caption{
  Preliminary analysis regarding straightness. Straightness strongly correlates to reconstruction accuracy (left), and can be used as a selection criterion to enhance inversion (right).
  }
  \label{fig:recon-straightness}
\end{figure}

If fixed-point inversion were solved to exact convergence at every step, a recovered state $x_j^\star$ would satisfy $r_j(x_j^\star)=0$, so applying the same discrete generation solver would map it back to $x_{j-1}$.
In practice, however, we only run a finite number of fixed-point iterations, and the obtained state $\hat{x}_j$ generally retains a nonzero residual $r_j(\hat{x}_j)\neq0$.
During reconstruction or editing, this residual becomes a local state mismatch that is recursively propagated by later generation steps.
Notably, how severely such per-step errors accumulate depends on the local behavior of $u_\theta$ along the inverse trajectory, since the generation step re-evaluates $u_\theta$ at each replayed state (\emph{e.g.}, Eq.~\eqref{eq:euler_gen}).
We conjecture that curved trajectories are more susceptible to this accumulation because the local constant-velocity approximation of a discretized solver is less reliable, so small residuals can induce larger changes in subsequent velocity evaluations.
Thus, among approximate fixed-point roots with comparable residuals, we prefer the one inducing a straighter inverse trajectory, as it is less likely to amplify residual errors during generation.

To empirically validate this claim, we first define discretized straightness (DS), a straightness measure for discrete trajectory $X=\{x_i\}_{i=0}^T$, obtained by discretizing Eq.~\eqref{eq:rf_straightness}:
\begin{equation}
    \label{eq:discretized_straightness}
    DS(X) = \sum_{i=1}^{T}h_i ||v_i-\bar v||^2,
\end{equation}
where $h_i=t_i-t_{i-1}$, $v_i=(x_i-x_{i-1})/h_i$, and $\bar v = {\sum h_i v_i} / {\sum h_i}$.
Similar to the continuous counterpart, a perfectly straight path would yield $DS(X)=0$.

In Fig.~\ref{fig:recon-straightness-correlation}, we plot reconstruction accuracy (PSNR) against the straightness measure (DS).
The plot shows a strong correlation between them: straighter inverse trajectories tend to yield more accurate reconstruction.
We also present an example in Fig.~\ref{fig:recon-straightness-vis} showing that straightness can serve as a valid objective for root selection.
The FPI+Selection variant, which performs best-of-$N$ selection (details in App.~\ref{app:greedy-selection-details}) to choose the straightest path among differently initialized states, can prevent the collapse of naive fixed-point iteration.
These results suggest that straightness is a strong indicator of inversion failure and can also serve as a practical root-selection criterion.
In later sections, we propose selecting such roots on the fly as the fixed-point iteration proceeds, which requires no multiple initializations and further improves reconstruction accuracy, as shown in Fig.~\ref{fig:recon-straightness-vis}.

\paragraph{Straightness Root Selector}
Motivated by the previous observation, we derive our root selector directly from the discretized rectified flow straightness in Eq.~\eqref{eq:discretized_straightness}.
While directly using DS as a root selector $\phi$ would be desirable, the global mean velocity $\bar v$ depends on entire trajectory and cannot be calculated during inversion.
Thus we replace $\bar v$ by the weighted average of previously observed velocities within the window size $W$:
\begin{equation}
    \bar v^{W}_{j-1} :=
    \frac{\sum_{i=1}^{W} h_{j-i} v_{j-i}}{\sum_{i=1}^{W} h_{j-i}}
    \label{eq:causal_local_mean_velocity}
\end{equation}

The approximation enables an on-the-fly, per-step proxy of Eq.~\eqref{eq:discretized_straightness} that can be measured during inversion.
Based on it, we propose our straightness root selector as below.

\begin{definition}[Straightness Root Selector]
For a candidate current state $x$ at time $t_j$, the straightness root selector is defined as
\begin{equation}
    \phi^{\mathrm{str}}_{j,W}(x)
    :=
    h_j
    \left\|
        v_j(x)-\bar v^{W}_{j-1}
    \right\|^2
    =
    \frac{1}{h_j}
    \|x-a_{j,W}\|^2,
    \label{eq:causal_local_straightness_selector}
\end{equation}
which is a quadratic distance from straightness anchor:
\begin{equation}
    a_{j,W}
    :=
    x_{j-1}+h_j\bar v^{W}_{j-1}.
    \label{eq:causal_local_straightness_anchor}
\end{equation}
\end{definition}

\begin{corollary*}
Since the selector is a quadratic distance, the minimization problem in Eq.~\eqref{eq:root_selection_general} reduces to
    \begin{equation}
        x_j^\star
        \in
        \operatorname{proj}_{S_j^C}(a_{j,W})
        :=
        \arg\min_{x\in S_j^C}
        \left\|x-a_{j,W}\right\|^2.
        \label{eq:causal_projection_selected_root}
    \end{equation}
\end{corollary*}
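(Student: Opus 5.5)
The plan is to verify the identity claimed in the Definition, and then observe that minimizing a positive multiple of a squared distance over a fixed set gives the same minimizers as minimizing the squared distance itself. The first step is to make $v_j(x)$ explicit. Following the convention of Eq.~\eqref{eq:discretized_straightness}, I take $v_j(x)=(x-x_{j-1})/h_j$, the velocity of the segment from the already recovered state $x_{j-1}$ to the candidate $x$ at time $t_j$. Substituting this gives
\begin{equation*}
h_j\left\|\frac{x-x_{j-1}}{h_j}-\bar v^{W}_{j-1}\right\|^2
=\frac{1}{h_j}\left\|x-x_{j-1}-h_j\bar v^{W}_{j-1}\right\|^2
=\frac{1}{h_j}\|x-a_{j,W}\|^2 ,
\end{equation*}
where the last equality is just Eq.~\eqref{eq:causal_local_straightness_anchor}. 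The key point is that $x_{j-1}$ and $\bar v^{W}_{j-1}$ depend only on states already fixed before step $j$ (Eq.~\eqref{eq:causal_local_mean_velocity}). Hence $a_{j,W}$ is a constant with respect to the optimization variable $x$.

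Next, since $h_j=t_j-t_{j-1}>0$ on the decreasing time grid, the map $s\mapsto s/h_j$ is strictly increasing on $[0,\infty)$. So for any $x,y\in S_j^C$ we have $\phi^{\mathrm{str}}_{j,W}(x)\le\phi^{\mathrm{str}}_{j,W}(y)$ if and only if $\|x-a_{j,W}\|^2\le\|y-a_{j,W}\|^2$. It follows that the two $\arg\min$ sets over $S_j^C$ coincide. Substituting into Eq.~\eqref{eq:root_selection_general} then yields Eq.~\eqref{eq:causal_projection_selected_root}, i.e.\ the selected root is a metric projection of the anchor onto $S_j^C$.

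The only real subtlety is whether the $\arg\min$ is nonempty, which the "$\in$" formulation tacitly presumes. I would handle it by noting that $S_j^C=\operatorname{Fix}(P_j)\cap C_j$ is closed whenever $P_j$ is continuous (true for a continuous $u_\theta$) and $C_j$ is closed. A closed nonempty set in $\mathbb{R}^d$ always admits a nearest point to any given point, by coercivity of $\|\cdot-a_{j,W}\|^2$ together with compactness of its sublevel sets. Uniqueness is not claimed, since $S_j^C$ need not be convex; this matches the set-valued notation $\operatorname{proj}_{S_j^C}$. If existence fails, for instance because $S_j^C$ is empty, the statement should be read as an equality of (possibly empty) $\arg\min$ sets, which the monotonicity argument still gives.
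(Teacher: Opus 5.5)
Your proof is correct and follows the paper's argument in App.~\ref{app:straightness-selector-algebra}: substitute $v_j(x)=(x-x_{j-1})/h_j$ to obtain $\phi^{\mathrm{str}}_{j,W}(x)=\frac{1}{h_j}\|x-a_{j,W}\|^2$, then use $h_j>0$ to conclude that the two $\arg\min$ sets over $S_j^C$ coincide. The one difference is in the well-posedness remark: the paper uses Assumption~\ref{assump:local_root_selection} (nonexpansive $P_j$ on a closed convex $C_j$) to make $S_j^C$ closed and convex, so the projection is a unique point, whereas you get existence from closedness alone and do not claim uniqueness, which is all the ``$\in$'' formulation requires.
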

Although the form has changed, minimizing our root selector does not relax the exact inversion constraint.
Moreover, the criterion requires no additional model forward passes, as it is computed directly from the previous inversion trajectory.

\subsection{Root-Selecting Fixed-Point Inversion}
The projection problem in Eq.~\eqref{eq:causal_projection_selected_root} cannot be solved directly because the constraint set $S_j^C$ is only available implicitly through the fixed-point equation $x=P_j(x)$. 
We therefore propose to apply the Halpern iteration~\cite{halpern1967fixed}:
\begin{equation}
    x_j^k
    =
    \alpha_k a_{j,W}
    +
    (1-\alpha_k)P_j(x_j^{k-1}),
    \label{eq:straightness_anchored_fpi}
\end{equation}
where $a_{j,W}$ is the straightness anchor defined in Eq.~\eqref{eq:causal_local_straightness_anchor}, and $\alpha_k\in(0,1)$ is a sequence that satisfies:
\begin{equation}
    \alpha_k\in(0,1),
    \qquad
    \alpha_k\to0,
    \qquad
    \sum_{k=1}^{\infty}\alpha_k=\infty,
    \qquad
    \sum_{k=2}^{\infty}|\alpha_k-\alpha_{k-1}|<\infty.
    \label{eq:vanishing_anchor_schedule}
\end{equation}

In our experiments, we consider the following family of schedules:
\begin{equation}
    \label{eqn:alpha-schedule}
    \alpha_k=\alpha_1\frac{\delta}{(k-1) + \delta},
    \qquad {k\ge 1,\ \ 0<\alpha_1<1,\ \ \delta>0.}
\end{equation}
{The schedule is controlled by two hyperparameters: $\alpha_1$ sets the value at the initial iteration, and $\delta$ controls how fast the schedule decreases. The schedule satisfies the condition in Eq.~\eqref{eq:vanishing_anchor_schedule} (App.~\ref{app:alpha-schedule-verification}).} 

{Under standard assumptions, the Halpern iteration converges strongly to the projection of the anchor $a_{j,W}$ onto the fixed-point set}~\cite{wittmann1992approximation}.
It works by blending two terms: the fixed-point term $P_j$ enforces the original local inversion equation, while the anchor term $a_{j,W}$ biases the iteration toward {a} straighter trajectory.
Since $\alpha_k$ vanishes in the limit, the anchor influences {which root is selected}, but the limiting point, {when} it exists, remains an exact inverse root.

{We now} state the guarantee under a standard local nonexpansiveness assumption commonly used in {the} fixed-point inversion literature~\cite{garibi2024renoise, samuellightning}:

\begin{assumption}[Local Root-Selection setting]
\label{assump:local_root_selection}
For each inversion step $j$, there exists a closed convex set
$C_j\subset\mathbb{R}^d$ such that
\begin{equation}
    P_j(C_j)\subset C_j,
    \qquad
    a_{j,W},x_j^0\in C_j,
    \qquad
    S_j^C:=\operatorname{Fix}(P_j)\cap C_j\neq\emptyset.
    \label{eq:assump_local_root_selection}
\end{equation}
Moreover, $P_j$ is nonexpansive on $C_j$:
\begin{equation}
    \|P_j(x)-P_j(y)\|\le \|x-y\|,
    \qquad
    \forall x,y\in C_j.
    \label{eq:assump_nonexpansive}
\end{equation}
\end{assumption}

\begin{theorem}[Straightness-minimizing fixed-point selection]
\label{thm:straightness_anchored_fpi}
Under Assumption~\ref{assump:local_root_selection} and with $\alpha_k$ satisfying Eq.~\eqref{eq:vanishing_anchor_schedule}, the sequence generated by Eq.~\eqref{eq:straightness_anchored_fpi} converges to
\begin{equation}
    x_j^\star
    \in
    \arg\min_{x\in S_j^C}
    \phi^{\mathrm{str}}_{j,W}(x).
    \label{eq:anchored_straightness_minimizer}
\end{equation}
\end{theorem}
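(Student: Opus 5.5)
The plan is to reduce the statement to Wittmann's strong-convergence theorem for Halpern iteration~\cite{wittmann1992approximation}, and then use the Corollary to identify the limit. Three steps are needed: check the theorem's hypotheses in our setting, establish convergence to the metric projection $\operatorname{proj}_{S_j^C}(a_{j,W})$, and translate that projection into a minimizer of $\phi^{\mathrm{str}}_{j,W}$ over $S_j^C$.

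\textbf{Setting.} We work in the Hilbert space $\mathbb{R}^d$ and restrict $T:=P_j|_{C_j}$.

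By Assumption~\ref{assump:local_root_selection}, $T$ maps the closed convex set $C_j$ into itself. It is nonexpansive, and $\operatorname{Fix}(T)=S_j^C\neq\emptyset$.

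By induction, every iterate stays in $C_j$. The base case is $x_j^0\in C_j$. If $x_j^{k-1}\in C_j$, then $P_j(x_j^{k-1})\in C_j$. Since $a_{j,W}\in C_j$ and $C_j$ is convex, the convex combination in Eq.~\eqref{eq:straightness_anchored_fpi} lies in $C_j$. Hence the recursion only ever evaluates $P_j$ on $C_j$, where the assumptions hold.

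\textbf{Convergence.} The fixed-point set of a nonexpansive self-map of a closed convex set is closed and convex, since $\operatorname{Fix}(T)=\bigcap_{y\in C_j}\{x:\|x-T x\|^2\le\dots\}$, or by the standard demiclosedness argument. Therefore the projection $p:=\operatorname{proj}_{S_j^C}(a_{j,W})$ exists and is unique.

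The schedule conditions in Eq.~\eqref{eq:vanishing_anchor_schedule} are exactly Wittmann's conditions. Invoking his theorem gives $x_j^k\to p$.

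If a self-contained argument is preferred over citation, the standard route has five parts:
\begin{itemize}
\item[(i)] Boundedness. Show $\|x^k-p\|\le\max(\|x^0-p\|,\|a-p\|)$ by induction, using nonexpansiveness.
\item[(ii)] Asymptotic regularity. Show $\|x^{k}-x^{k-1}\|\to0$. This uses $\|x^{k+1}-x^k\|\le(1-\alpha_{k+1})\|x^k-x^{k-1}\|+|\alpha_{k+1}-\alpha_k|M$ together with the lemma that a sequence with $s_{k+1}\le(1-\gamma_k)s_k+\beta_k$, $\sum\gamma_k=\infty$, $\sum\beta_k<\infty$ tends to zero. It then follows that $\|x^k-Tx^k\|\to0$, because $\alpha_k\to0$.
\item[(iii)] Limsup inequality. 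Show $\limsup_k\langle a-p,x^k-p\rangle\le0$. Take a subsequence that attains the limsup and converges (we are in finite dimensions). By continuity of $T$ and (ii), its limit $q$ lies in $\operatorname{Fix}(T)$. The projection characterization $\langle a-p,q-p\rangle\le0$ then closes this step.
\item[(iv)] Recursive inequality. Expand $\|x^{k}-p\|^2=\|\alpha_k(a-p)+(1-\alpha_k)(Tx^{k-1}-p)\|^2\le(1-\alpha_k)\|x^{k-1}-p\|^2+2\alpha_k\langle a-p,x^{k}-p\rangle$.
\item[(v)] Conclusion. Apply the Xu-type lemma with $\sum\alpha_k=\infty$ and the limsup from (iii).
\end{itemize}

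\textbf{Identifying the limit.} By Eq.~\eqref{eq:causal_local_straightness_selector}, $\phi^{\mathrm{str}}_{j,W}(x)=h_j^{-1}\|x-a_{j,W}\|^2$ with $h_j>0$. Multiplying the objective by a positive constant does not change its minimizers. Hence $\arg\min_{S_j^C}\phi^{\mathrm{str}}_{j,W}=\operatorname{proj}_{S_j^C}(a_{j,W})=\{p\}$, as in the Corollary, which establishes Eq.~\eqref{eq:anchored_straightness_minimizer}.

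The main obstacle is step (ii), asymptotic regularity, which is where the summability of $|\alpha_k-\alpha_{k-1}|$ is used. If we simply cite~\cite{wittmann1992approximation}, the only delicate point left is the invariance of $C_j$ under the iteration and the closed convexity of $S_j^C$.
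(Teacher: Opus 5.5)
Your proposal is correct and follows the paper's proof. The paper's proof records the consequences of Assumption~\ref{assump:local_root_selection} and invokes a classical strong-convergence theorem for Halpern iteration; it cites \cite{bauschke1996approximation}, Thm.~3.1 in the single-map case, whose schedule conditions coincide with those of \cite{wittmann1992approximation}. This gives $x_j^k\to\operatorname{proj}_{S_j^C}(a_{j,W})$, and the paper then identifies this projection with the unique minimizer of $\phi^{\mathrm{str}}_{j,W}=h_j^{-1}\|x-a_{j,W}\|^2$, exactly as you do.

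Your explicit check that the iterates stay in $C_j$ and your self-contained five-step sketch are correct additions that the paper leaves implicit. The ellipsis in your description of $\operatorname{Fix}(T)$ should read $\bigcap_{y\in C_j}\{x\in C_j:\|x-Ty\|\le\|x-y\|\}$; the paper simply asserts this standard fact as well.
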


\begin{proof}
The proof is given in App.~\ref{app:anchored_fpi_proof}.
\end{proof}

\paragraph{Decoupled Momentum}

Although Eq.~\eqref{eq:straightness_anchored_fpi} gives the desired selection principle, we empirically find that smoothing the fixed-point correction improves finite-iteration behavior.
Thus, we propose a decoupled momentum, where a momentum variable is applied only for $P_j$ term (which motivates the term \textit{decoupled}), and the straightness anchor remains unchanged.

The decoupled momentum, with initialization set to $M_j^0=x_j^0$, is:
\begin{align}
    M_j^k
    &=
    \mu M_j^{k-1}
    +
    (1-\mu)P_j(x_j^{k-1}),
    \label{eq:decoupled_momentum_state}
    \\
    x_j^k
    &=
    \alpha_k a_{j,W}
    +
    (1-\alpha_k)M_j^k,
    \label{eq:decoupled_momentum_anchor}
\end{align}
where $\mu\in[0,1)$ is the momentum coefficient that controls the strength of momentum; when $\mu=0$, it exactly recovers the Halpern iteration in Eq.~\eqref{eq:straightness_anchored_fpi}.

We posit that the decoupled momentum update {yields} better convergence by selectively smoothing only the update signal from $P_j(x_j^{k-1})$, which {depends on iteration-varying model outputs whose errors and noise can harm} convergence.
In contrast, {the} anchor $a_{j,W}$ remains the same throughout the iteration, {so not} averaging it into the momentum variable {helps preserve} the root-selection signal.
{Beyond this empirical gain}, our decoupled momentum update also {achieves} exact convergence asymptotically.
{The complete inversion procedure, combining the straightness anchor, the anchored update, and decoupled momentum, is summarized in Algorithm~\ref{alg:straightness_fixed_point_inversion} (App.~\ref{app:algorithm}).}

\begin{theorem}[Momentum preserves the straightness-selected root]
\label{thm:momentum_preserves_selection}
Under the same assumptions as Thm.~\ref{thm:straightness_anchored_fpi}, with $\alpha_k$ in Eq.~\eqref{eqn:alpha-schedule} and with the momentum weight $ \mu \in [0,1)$, the sequence generated by Eqs.~\eqref{eq:decoupled_momentum_state}--\eqref{eq:decoupled_momentum_anchor} satisfies
\begin{equation}
    x_j^k\to x_j^\star,
    \qquad
    x_j^\star
    \in
    \arg\min_{x\in S_j^C}
    \phi^{\mathrm{str}}_{j,W}(x).
    \label{eq:momentum_straightness_minimizer}
\end{equation}
\end{theorem}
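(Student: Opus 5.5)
The idea is to reduce the momentum scheme to a Halpern-type iteration with an averaged operator, and then invoke the strong-convergence result behind Thm.~\ref{thm:straightness_anchored_fpi}. The natural candidate is $T_\mu:=\mu I+(1-\mu)P_j$. This map is nonexpansive on $C_j$, since $C_j$ is convex and $P_j(C_j)\subset C_j$, so $T_\mu(C_j)\subset C_j$. It also satisfies $\operatorname{Fix}(T_\mu)=\operatorname{Fix}(P_j)$, hence $\operatorname{Fix}(T_\mu)\cap C_j=S_j^C$. The difficulty is that the momentum recursion mixes $P_j$ with $M_j^{k-1}$ rather than with $x_j^{k-1}$. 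From Eq.~\eqref{eq:decoupled_momentum_anchor} we have $M_j^{k-1}=(x_j^{k-1}-\alpha_{k-1}a_{j,W})/(1-\alpha_{k-1})$. Substituting this into Eq.~\eqref{eq:decoupled_momentum_state} gives
\begin{equation*}
M_j^k = T_\mu(x_j^{k-1}) + \mu\,\frac{\alpha_{k-1}}{1-\alpha_{k-1}}\bigl(x_j^{k-1}-a_{j,W}\bigr),
\end{equation*}
and therefore $x_j^k=\alpha_k a_{j,W}+(1-\alpha_k)T_\mu(x_j^{k-1})+e_k$, with error $e_k=(1-\alpha_k)\mu\frac{\alpha_{k-1}}{1-\alpha_{k-1}}(x_j^{k-1}-a_{j,W})$. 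Induction shows every iterate is a convex combination of points of $C_j$: $M_j^k$ averages $M_j^{k-1}$ and $P_j(x_j^{k-1})$, and $x_j^k$ averages $a_{j,W}$ and $M_j^k$. So all iterates stay in $C_j$, and the anchored iteration is well defined.

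The cleaner route is to absorb $e_k$ into the anchor weight. Collecting terms gives $x_j^k=\beta_k a_{j,W}+\gamma_k x_j^{k-1}+(1-\alpha_k)(1-\mu)P_j(x_j^{k-1})$, where $\gamma_k=(1-\alpha_k)\mu/(1-\alpha_{k-1})$ and $\beta_k=\alpha_k-(1-\alpha_k)\mu\alpha_{k-1}/(1-\alpha_{k-1})$. For the schedule in Eq.~\eqref{eqn:alpha-schedule}, $\alpha_k-\alpha_{k-1}=O(1/k^2)$ while $\alpha_k\sim\alpha_1\delta/k$. Hence $\beta_k=(1-\mu)\alpha_k+O(\alpha_k^2)+O(1/k^2)$, and $\beta_k\sim(1-\mu)\alpha_1\delta/k$. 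This is positive for large $k$, tends to zero, is not summable, and has summable variation. In this regime the iteration is a Halpern iteration $x^k=\beta_k a+(1-\beta_k)\tilde T_k(x^{k-1})$, where $\tilde T_k=\lambda_k I+(1-\lambda_k)P_j$ and $\lambda_k=\gamma_k/(1-\beta_k)\to\mu$. The nonnegative coefficients sum to one. The finitely many early steps where $\beta_k$ may be negative only shift the starting point inside a bounded set, so they do not matter.

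Next I adapt Wittmann's proof, the result used for Thm.~\ref{thm:straightness_anchored_fpi}, to this slightly variable averaged operator. The steps are as follows. (i) Boundedness: with $p\in S_j^C$, $\|x^k-p\|\le\beta_k\|a-p\|+(1-\beta_k)\|x^{k-1}-p\|$, so $\|x^k-p\|\le\max(\|a-p\|,\|x^0-p\|)$. (ii) Asymptotic regularity: $\|x^{k+1}-x^k\|\le(1-\beta_{k+1})\|x^k-x^{k-1}\|+O(|\beta_{k+1}-\beta_k|+|\lambda_{k+1}-\lambda_k|)$, and Xu's lemma (a recursive inequality $s_{k+1}\le(1-\beta_{k+1})s_k+\varepsilon_k$ with $\sum\beta_k=\infty$ and $\sum\varepsilon_k<\infty$) gives $\|x^{k+1}-x^k\|\to0$. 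Then $\|x^k-\tilde T_k x^k\|\to0$, and since $\lambda_k\to\mu<1$ this yields $\|x^k-P_jx^k\|\to0$. (iii) Selection: let $q=\operatorname{proj}_{S_j^C}(a_{j,W})$, which is well defined because $S_j^C$ is closed and convex as the fixed-point set of a nonexpansive self-map of a closed convex set. By demiclosedness of $I-P_j$, every cluster point lies in $S_j^C$, so $\limsup_k\langle a-q,x^k-q\rangle\le0$ by the projection characterization. The standard inequality $\|x^k-q\|^2\le(1-\beta_k)\|x^{k-1}-q\|^2+2\beta_k\langle a-q,x^k-q\rangle$ then gives $x^k\to q$ by Xu's lemma again. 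Finally, the Corollary identifies $q$ as the minimizer of $\phi^{\mathrm{str}}_{j,W}$ over $S_j^C$.

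I expect step (ii) to be the main obstacle. It requires bounding the variation of the time-varying operator $\tilde T_k$ and of $\beta_k$ by summable sequences, and this is exactly where the explicit $O(1/k^2)$ differences of the specific schedule in Eq.~\eqref{eqn:alpha-schedule} are used, rather than just the abstract conditions in Eq.~\eqref{eq:vanishing_anchor_schedule}. That explains why Thm.~\ref{thm:momentum_preserves_selection} is stated for that schedule.
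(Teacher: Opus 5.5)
Your proof is correct, but it reaches the conclusion by a different route from the paper's.

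\textbf{Shared starting point.} Both arguments begin the same way. Eliminating $M_j^{k-1}$ through the anchor relation gives the same one-state recursion in both; your $\beta_k,\gamma_k$ are exactly the paper's $\eta_{k-1},\rho_{k-1}$.

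\textbf{The paper's route.} The paper regroups the recursion as $x^{k+1}=b_kx^k+(1-b_k)\bigl(\gamma_k a+(1-\gamma_k)P(x^k)\bigr)$, a relaxed (modified) Halpern step, and cites the Cuntavepanit--Panyanak theorem. That theorem needs only $\gamma_k\to0$, $\sum_k\gamma_k=\infty$, and $b_k$ bounded away from $0$ and $1$. So no variation conditions have to be checked. The explicit schedule enters chiefly through $\alpha_{k+1}/\alpha_k\to1$, which gives $\eta_k>0$ eventually and $\gamma_k\sim\alpha_k$. The price is that $\liminf_k b_k=\mu>0$ is required, so the case $\mu=0$ must be handed back to Thm.~\ref{thm:straightness_anchored_fpi}.

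\textbf{Your route.} You instead put the anchor outside, writing $x^k=\beta_k a+(1-\beta_k)\tilde T_k(x^{k-1})$ with the time-varying averaged map $\tilde T_k=\lambda_k I+(1-\lambda_k)P_j$. You then re-run the Wittmann/Xu argument by hand.
\begin{itemize}
\item \emph{Gains.} The argument is self-contained and treats all $\mu\in[0,1)$ uniformly. It also makes transparent that momentum is a Halpern iteration of $\mu I+(1-\mu)P_j$ with effective anchor weight $\beta_k\approx(1-\mu)\alpha_k$.
\item \emph{Cost.} You must control $\sum_k|\beta_{k+1}-\beta_k|$ and $\sum_k|\lambda_{k+1}-\lambda_k|$. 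You do this correctly from $\alpha_{k-1}-\alpha_k=O(1/k^2)$, which gives $\gamma_k-\mu=O(1/k^2)$ and $\beta_k=(1-\mu)\alpha_k+O(1/k^2)$. You combine this with $\|\tilde T_{k+1}y-\tilde T_k y\|=|\lambda_{k+1}-\lambda_k|\,\|y-P_jy\|$ on the bounded orbit, and finite-dimensional demiclosedness is then just continuity of $P_j$.
\end{itemize}

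\textbf{Bookkeeping (does not affect the argument).}
\begin{itemize}
\item The substitution for $M_j^{k-1}$ is valid only for $k\ge2$, because $M_j^0=x_j^0$ is not tied to the anchor relation.
\item The boundedness constant should use the iterate at the first index after which $\beta_k\in(0,1)$, not $x^0$.
\item In step (iii) you need the version of Xu's lemma with a term $\beta_k c_k$ where $\limsup_k c_k\le0$, not the summable-error version you stated in (ii).
\end{itemize}
None of these matters, since all iterates stay in $C_j$ by your convex-combination induction.
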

\begin{proof}
The proof is given in App.~\ref{app:momentum_proof}. 
\end{proof}

\section{Related Work}
\paragraph{Inversion}
Inversion is central to training-free image editing, but naive methods such as DDIM inversion~\cite{songdenoising-ddim} accumulate discretization error. 
Prior diffusion inversion methods reduce this error by modifying conditioning~\cite{miyake2023negative}, optimizing auxiliary variables~\cite{dong2023pti,mokady2023nulltext}, or coupling inversion and reconstruction streams~\cite{wallace2023edict}. 
Fixed-point inversion instead solves the discrete local inverse equation directly; ReNoise~\cite{garibi2024renoise}, AIDI~\cite{pan2023aidi}, and GNRI~\cite{samuellightning} improve this process through iterative noising, acceleration, or Newton-style updates. 
These methods focus on obtaining a consistent inverse point, while SelFix addresses the complementary problem of selecting among distinct approximate roots.
\paragraph{Rectified flow inversion and trajectory straightness}
For rectified flow models, recent inversion methods such as RF-Inversion~\cite{rout2024semantic}, RF-Solver~\cite{wang2025taming-rfsolver}, FireFlow~\cite{deng2025fireflow}, UniEdit-Flow~\cite{jiao2026unieditflow}, and PMI~\cite{wang2026freelunch} reduce inversion error through tailored solvers, optimal-control formulations, or velocity priors. 
SelFix differs by preserving the fixed-point inverse equation and using trajectory straightness to select the preferred root. 
This is motivated by the design principle of rectified flow, where straighter trajectories are easier to simulate with coarse solvers~\cite{liu2022rectifiedflow}; related work improves straightness through coupling design, curvature or variance modeling, encoders, or test-time refinement~\cite{pooladian2023multisample,tong2023minibatchot,lee2023trajcurve,guo2025vrm,kim2024sfno,cha2026training}. 
We instead use straightness as a causal root-selection criterion during inversion.
\paragraph{Fixed-point selection}
Halpern iteration is a classical fixed-point selection mechanism for nonexpansive maps~\cite{halpern1967fixed,bauschke1996approximation,wittmann1992approximation}. 
Our contribution is not a new general fixed-point algorithm, but an inversion-specific instantiation: we derive a rectified flow straightness anchor and show that the resulting anchored fixed-point update selects the corresponding exact local inverse root asymptotically.

\section{Experiments}
Following the convention~\cite{deng2025fireflow}, we validate the effectiveness of our method in real image reconstruction and editing tasks.
For baselines, we consider methods using flow-tailored higher-order inversion solvers (RF-Solver~\cite{wang2025taming-rfsolver} and FireFlow~\cite{deng2025fireflow}), and methods using fixed-point formulation (ReNoise~\cite{garibi2024renoise} and AIDI~\cite{pan2023aidi}).
We also include naive baselines, one that simply integrates the flow in reverse direction (ReFlow) and one that does a naive fixed-point iteration (FPI).
We use FLUX.1-dev~\cite{BlackForestLabs2024FLUX1} model as a backbone, and the number of function evaluations (NFE) is matched across baselines for fair comparison.
Additional experiment details are provided in App.~\ref{app:exp-details}.

\subsection{Image Reconstruction}

\begin{table*}[!t]
\centering

\begin{minipage}[t]{0.495\textwidth}
\vspace{0pt}
\centering
\footnotesize
\setlength{\tabcolsep}{2.7pt}
\captionof{table}{Quantitative image reconstruction results on PIE-Bench.}
\label{tab:reconstruction}
\begin{tabular}{@{}lccccc@{}}
\toprule
\textbf{Method} 
& \textbf{LPIPS} $\downarrow$ 
& \textbf{SSIM} $\uparrow$ 
& \textbf{PSNR} $\uparrow$ 
& \textbf{DS} $\downarrow$
& \textbf{$\sum \phi^{\mathrm{str}}_{j,W}$ $\downarrow$} 
\\ 
\midrule
ReFlow & 0.2382 & 0.6965 & 19.54 & 0.181 & 0.176 \\ 
\midrule
\rowcolor{gray!10}
\multicolumn{6}{l}{\textit{Improved Solver}} \\
\addlinespace[2pt]
RF-Solver & 0.1997 & 0.7400 & 21.31 & 0.172 & 0.162 \\
FireFlow  & 0.1037 & 0.8408 & 26.98 & 0.185 & 0.180 \\
\midrule
\rowcolor{gray!10}
\multicolumn{6}{l}{\textit{Fixed-Point Inversion}} \\
\addlinespace[2pt]
ReNoise & 0.1155 & 0.8159 & 22.97 & 0.177 & 0.157 \\
AIDI    & 0.1533 & 0.7862 & 23.00 & 0.187 & 0.165 \\
FPI     & 0.3059 & 0.6317 & 15.05 & 0.208 & 0.186 \\
\addlinespace[2pt]
\textbf{SelFix (Ours)} 
& \textbf{0.0742} 
& \textbf{0.8714} 
& \textbf{28.29} 
& \textbf{0.170} 
& \textbf{0.151} 
\\ 
\bottomrule
\end{tabular}
\end{minipage}
\hfill
\begin{minipage}[t]{0.47\textwidth}
\vspace{0pt}
\centering
\includegraphics[height=4cm]{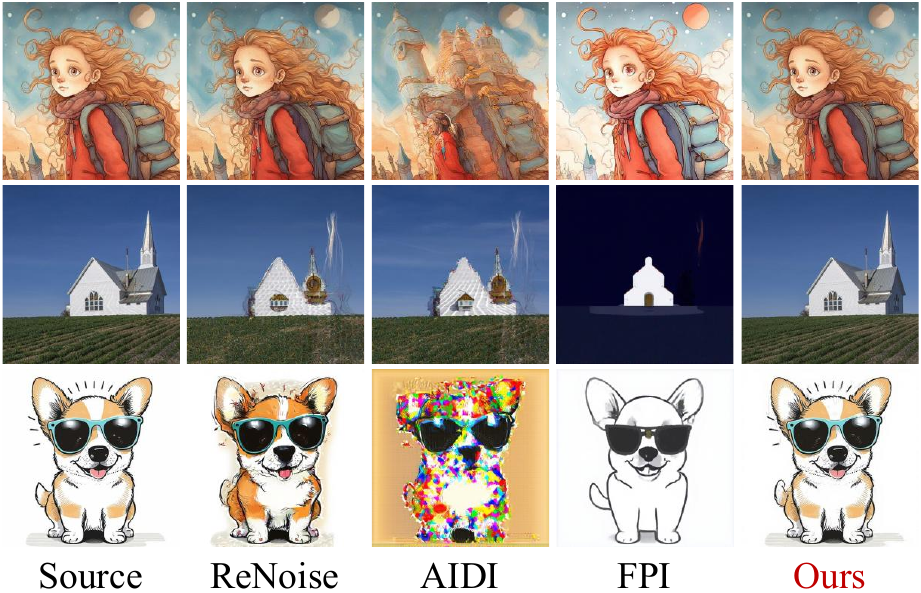}
\captionof{figure}{Qualitative real-image reconstruction results on PIE-Bench. We also provide more samples in App.~\ref{app:additional-reconstruction}.}
\label{fig:qual-recon}
\end{minipage}

\end{table*}

\paragraph{Evaluation Metrics}
We report the reconstruction results on PIE-Bench~\cite{ju2023direct} using standard reconstruction accuracy metrics.
We also report inversion trajectory straightness in two ways: 
the discretized trajectory straightness DS in Eq.~\eqref{eq:discretized_straightness}, and the accumulated selector value $\sum_j \phi^{\mathrm{str}}_{j,W}$, where $\phi^{\mathrm{str}}_{j,W}$ is the straightness proxy used by our root selector in Eq.~\eqref{eq:causal_local_straightness_selector}.
Both metrics decrease as the trajectory becomes straighter.

\paragraph{{Results}}
As shown in {Tab.~\ref{tab:reconstruction}}, SelFix achieves the best performance across all reconstruction metrics, outperforming both advanced fixed-point inversion methods and naive FPI.
Qualitative results in Fig.~\ref{fig:qual-recon} show the same trend: SelFix avoids the error accumulation that causes other fixed-point baselines to fail.
This suggests that fixed-point inversion benefits from selecting among reachable local inverse roots, rather than simply applying unguided fixed-point iteration.

The straightness measurements further support this interpretation.
SelFix obtains the lowest $DS$ and the lowest accumulated proxy value $\sum_j\phi^{\mathrm{str}}_{j,W}$ among all compared methods.
In particular, compared with FPI, SelFix reduces $\sum_j\phi^{\mathrm{str}}_{j,W}$ from $0.1855$ to $0.1505$, which is expected since the proposed update explicitly biases the fixed-point iteration toward roots close to the straightness anchor.
This reduction is accompanied by a decrease in the trajectory-level straightness measure $DS$ from $0.2080$ to $0.1702$.
That is, although $\phi^{\mathrm{str}}_{j,W}$ is a proxy, its reduction is well aligned with the observed decrease in $DS$.
Taken together, the improvement in both reconstruction metrics and straightness metrics validates our method and supports the hypothesis that selecting a straighter inverse trajectory helps reduce error accumulation during reconstruction.

\subsection{Image Editing}
\begin{figure*}[!t]
\centering
\footnotesize
\begin{minipage}{0.3\textwidth}
    \centering
    \includegraphics[width=0.99\linewidth]{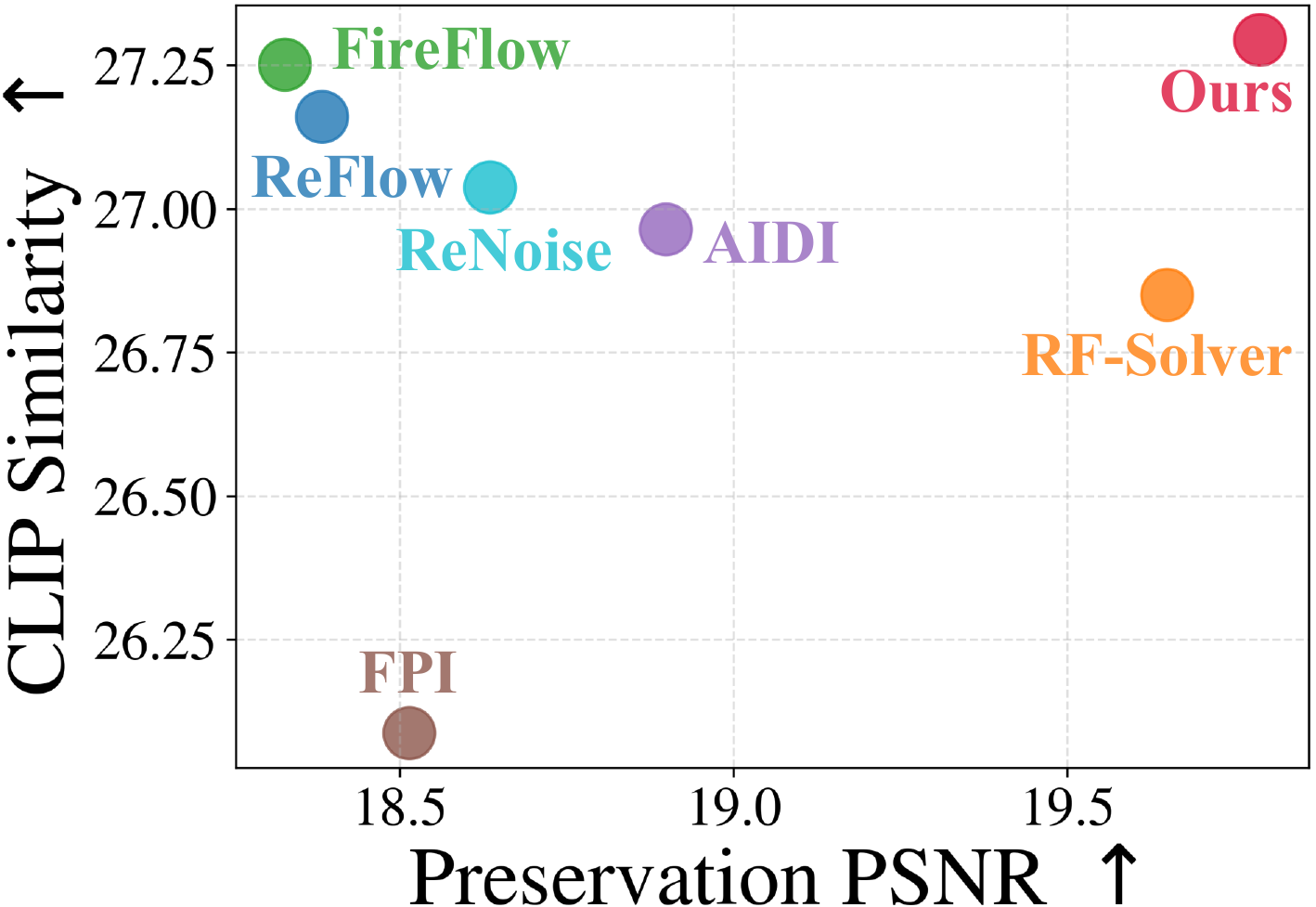}
    \captionof{figure}{Source-preservation and prompt-alignment trade-off on PIE-Bench editing.}
    \label{fig:pareto-edit}
\end{minipage}
\hfill
\begin{minipage}{0.69\textwidth}
    \centering
    \setlength{\tabcolsep}{3.8pt}
    \captionof{table}{Quantitative prompt-based editing results on PIE-Bench.}
    \label{tab:editing}
    \begin{tabular}{@{}lcccccc@{}}
    \toprule
    \multirow{2}{*}{\textbf{Method}} & \multirow{2}{*}{\textbf{\shortstack{Structure \\ Dist. $\downarrow$}}} & \multicolumn{3}{c}{\textbf{Background Preservation}} & \multicolumn{2}{c}{\textbf{CLIP Sim. $\uparrow$}} \\ \cmidrule(lr){3-5} \cmidrule(lr){6-7}
     & & \textbf{PSNR} $\uparrow$ & \textbf{LPIPS} $\downarrow$ & \textbf{SSIM} $\uparrow$ & \textbf{Whole} & \textbf{Edited} \\ \midrule

    ReFlow & 0.0657 & 18.38 & 0.2329 & 0.7204 & 27.16 & 23.85 \\ \midrule

    RF-Solver & 0.0525 & 19.65 & 0.1992& 0.7527 & 26.85 & 23.64 \\
    FireFlow & 0.0629 & 18.33 & 0.2218 & 0.7305 & 27.25 & \textbf{23.98} \\ \midrule

    ReNoise & 0.0530 & 18.64 & 0.1943  & 0.7449 & 27.04 & 23.71 \\
    AIDI & 0.0505 & 18.90 & 0.1943 & 0.7581 & 26.96 & 23.64 \\
    FPI & 0.0570 & 18.51 & 0.1830 & 0.7482 & 26.09 & 22.95 \\
    \midrule
    \textbf{SelFix (Ours)} & \textbf{0.0457} & \textbf{19.79} & \textbf{0.1767} & \textbf{0.7748} & \textbf{27.29} & 23.87 \\ \bottomrule
    \end{tabular}
\end{minipage}
\par\medskip
\centering
\includegraphics[width=\linewidth]{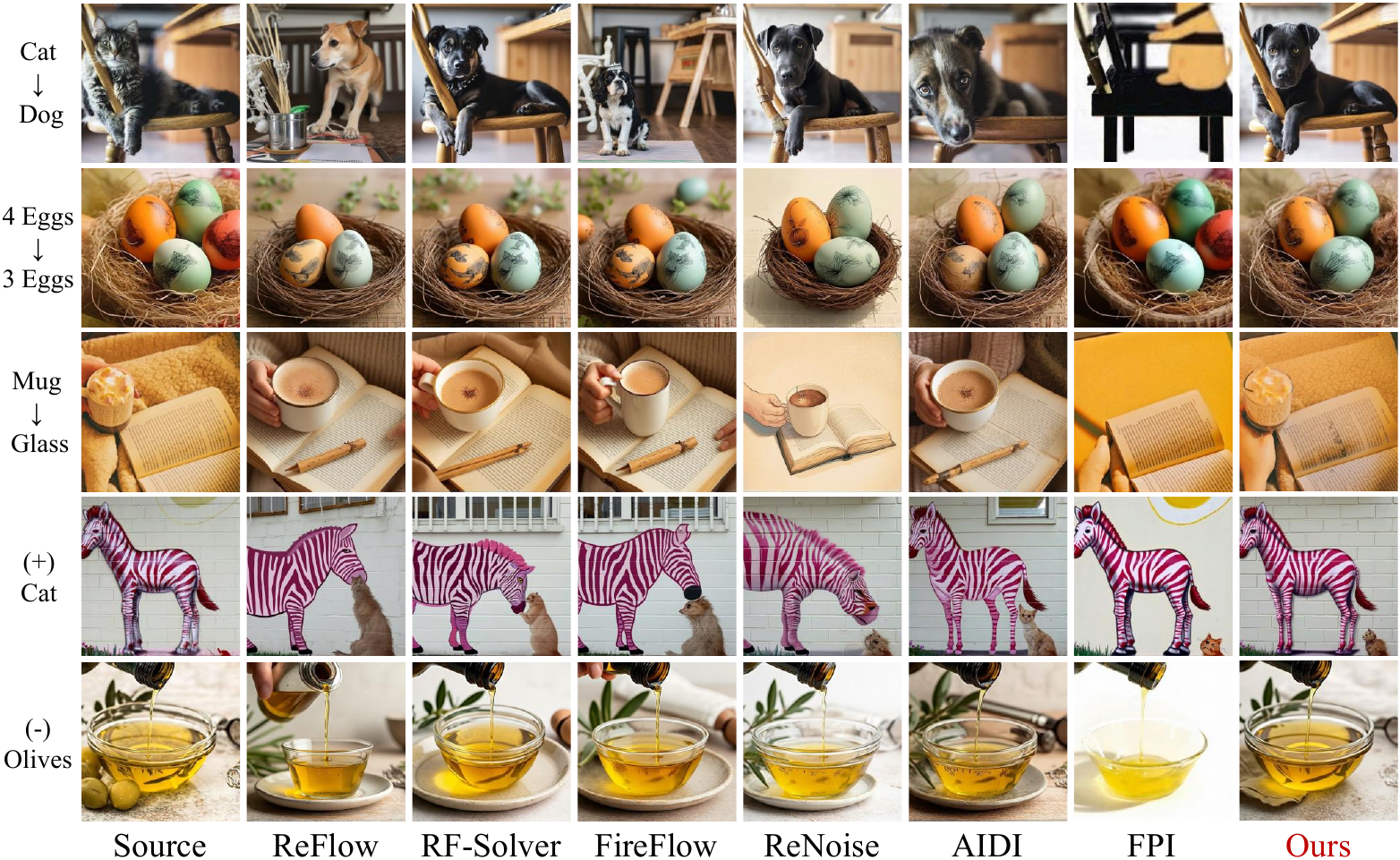}
\captionof{figure}{Qualitative prompt-based editing results on PIE-Bench.
SelFix better preserves background while applying the requested semantic edit.
We also provide more samples in App.~\ref{app:additional-editing}.}
\label{fig:qual-edit}
\end{figure*}
\paragraph{Evaluation}
{We evaluate SelFix on the PIE-Bench~\cite{ju2023direct} prompt-based image editing task.
We report Structure Distance (DINO self-similarity), background-region preservation (PSNR/LPIPS/SSIM), and CLIP similarity on the whole and edited regions (Tab.~\ref{tab:editing}).
As source preservation and target-prompt following exhibit a trade-off, we also plot CLIP similarity against background PSNR in Fig.~\ref{fig:pareto-edit}.}
\paragraph{Results}
As shown in Tab.~\ref{tab:editing}, SelFix achieves the strongest source preservation among all compared methods.
This is consistent with the reconstruction results: by selecting a straighter inverse trajectory, SelFix obtains a more faithful inverse noise, which provides a better starting point for preserving source content during editing.
SelFix obtains the lowest structure distance and the best background-preservation scores, while remaining competitive in target-prompt alignment.
Overall, it improves the preservation-editing trade-off, as shown in Fig.~\ref{fig:pareto-edit}.

Qualitative results in Fig.~\ref{fig:qual-edit} show the same trend.
SelFix better preserves background details and global structure while still applying the requested edit.
For example, in the second row, SelFix preserves the nest shape and background while correctly changing four eggs to three.
In contrast, baselines either alter the global image structure or fail to render the requested number of eggs.

\subsection{Ablation Study}

\begin{table*}[t]
\centering
\footnotesize
\setlength{\tabcolsep}{3pt}

\begin{minipage}[t]{0.27\textwidth}
\centering
\captionof{table}{Ablation on guiding mechanism.}
\label{tab:heuristic-regularization}
\begin{tabular}{@{}lcc@{}}
  \toprule
  \textbf{Method} & \textbf{PSNR} $\uparrow$ & \textbf{DS} $\downarrow$ \\
  \midrule
  Const. $\alpha$ (0.5) & 15.53 & 0.127 \\
  Const. $\alpha$ (0.0068) & 27.86 & 0.170 \\
  Grad. (lr=0.1)             & 19.56 & 0.157 \\
  Grad. (lr=0.01)             & 27.81 & 0.170 \\
  \textbf{SelFix (Ours)}          & \textbf{28.29} & \textbf{0.170} \\
  \bottomrule
\end{tabular}
\end{minipage}
\hfill
\begin{minipage}[t]{0.7\textwidth}
\centering
\captionof{table}{Hyperparameter ablations.}
\label{tab:abl-hyperparameters}
    \begin{subtable}[t]{0.42\linewidth}
        \centering
        \subcaption{Momentum coefficient $\mu$.}
        \label{tab:abl-momentum}
        \begin{tabular}{@{}llcc@{}}
        \toprule
        Type & $\boldsymbol{\mu}$ & \textbf{PSNR} $\uparrow$ & \textbf{DS} $\downarrow$ \\
        \midrule
        \multirow{3}{*}{Decoupled}
        & 0.75 & 26.93 & \textbf{0.126} \\
        & \textbf{0.5} & \textbf{28.29} & 0.170 \\
        & 0.25 & 23.71 & 0.180 \\
        \multirow{1}{*}{Coupled}
        & 0.5 & 27.54 & 0.170 \\
        \multirow{1}{*}{--}
        & 0.0 & 16.77 & 0.204 \\
        \bottomrule
        \end{tabular}
    \end{subtable}\hfill%
    \begin{subtable}[t]{0.32\linewidth}
        \centering
        \subcaption{Decay $\delta$.}
        \label{tab:abl-decay}
        \begin{tabular}{@{}ccc@{}}
        \toprule
        $\boldsymbol{\delta}$ & \textbf{PSNR} $\uparrow$ & \textbf{DS} $\downarrow$ \\
        \midrule
        1 & 27.29 & \textbf{0.1678} \\
        1/2 & 27.60 & 0.1689 \\
        1/4 & 28.21 & 0.1700 \\
        \textbf{1/8} & \textbf{28.29} & 0.1702 \\
        1/16 & 27.98 & 0.1700 \\
        \bottomrule
        \end{tabular}
    \end{subtable}\hfill%
    \begin{subtable}[t]{0.26\linewidth}
        \centering
        \subcaption{Window $W$.}
        \label{tab:abl-window}
        \begin{tabular}{@{}ccc@{}}
        \toprule
        $\boldsymbol{W}$ & \textbf{PSNR} $\uparrow$ & \textbf{DS} $\downarrow$ \\
        \midrule
        1 & \textbf{28.29} & 0.1702 \\
        2 & \textbf{28.29} & 0.1701 \\
        3 & 28.26 & 0.1699 \\
        4 & 28.17 & \textbf{0.1697} \\
        \bottomrule
        \end{tabular}
    \end{subtable}
\end{minipage}
\end{table*}

\paragraph{Guiding Mechanism}
Tab.~\ref{tab:heuristic-regularization} compares SelFix with alternative ways of injecting the straightness anchor.
Specifically, we compare SelFix with variants using either constant scheduling or gradient-based correction to minimize MSE to anchor.
As can be seen, all variants produce degraded {results} in terms of PSNR.
While blending the straightness anchor with a large constant $\alpha$ or using a gradient-based update with a large learning rate can lead to even smaller DS, they bias the fixed-point equation too much and result in inaccurate inversion.
Conversely, when hyperparameters are tuned to match DS across methods, we see that SelFix performs the best.
{We attribute this to SelFix's vanishing anchor schedule, which simultaneously guides the iteration toward a straighter trajectory and produces an accurate fixed point.}

\paragraph{Hyperparameters}
Tab.~\ref{tab:abl-hyperparameters} ablates the main hyperparameters of SelFix. 
Momentum is critical for finite-iteration convergence: removing it drops PSNR to $16.77$, while $\mu=0.5$ gives the best reconstruction. 
Smaller momentum insufficiently smooths the fixed-point correction, whereas larger momentum slows convergence and can leave the trajectory overly anchor-biased, yielding lower DS but worse PSNR. 
Decoupled momentum also outperforms coupled momentum, supporting our choice to smooth only the fixed-point term while applying the straightness anchor directly. 
For the anchor schedule, $\delta=1/8$ performs best: too fast decay gives the anchor too little time to steer root selection, while too slow decay leaves a non-negligible anchor term near the end of the inner loop and can hinder accurate inversion. 
Finally, short straightness windows work best. 
Although larger $W$ slightly reduces DS, it relies on longer recovered trajectory segments that may already contain accumulated numerical or model errors. 
We therefore use the local setting $W=1$ by default, which provides a reliable constant-velocity estimate and the best reconstruction accuracy.

\section{Conclusion}

We presented SelFix, a straightness-guided fixed-point inversion method for rectified flows that reframes inversion as root selection over the exact local inverse set.
By deriving a causal straightness anchor from the inverse trajectory and using a vanishing anchored iteration with decoupled momentum, SelFix selects a straighter exact inverse root while preserving the original fixed-point equation asymptotically.
Experiments on reconstruction and prompt-based editing show that this selection principle improves reconstruction fidelity, trajectory straightness, and source preservation over solver-based and fixed-point baselines, suggesting that trajectory geometry is an effective criterion for resolving multi-root ambiguity in rectified flow inversion.

\begin{ack}
{
This work was in part supported by the National Research Foundation of Korea (RS-2024-00351212 and RS-2024-00436165), the Institute of Information \& communications Technology Planning \& Evaluation (IITP) (RS-2024-00509279, RS-2022-II220926, and RS-2022-II220959, RS-2019-II190075), and the High-Performance Computing Support Project funded by the Korea government.
}

\end{ack}

\bibliography{neurips_2026}
\bibliographystyle{acl_natbib}

\newpage
\appendix
\appendix

\section{Limitations and Broader Impacts}
\label{sec:app-limitations}

SelFix is designed as a root-selection mechanism for fixed-point inversion in rectified flows, and therefore inherits the limitations of the underlying generative backbone.
For example, the demonstrated editing pipeline may still fail on challenging edits when the backbone model itself has limited generation or instruction-following capability.
In addition, our theoretical guarantee is local and asymptotic: it assumes a nonempty local fixed-point set and local nonexpansiveness of the fixed-point map, while practical inversion uses only a finite number of inner iterations.
Although we find that a small number of iterations, at most 10 in our experiments, is sufficient for reconstruction and editing, finite-iteration residuals cannot be fully eliminated in practice.
{A finite-distance counterpart is provided in App.~\ref{app:approximate-iterate-bound}, which shows that an iterate within $\varepsilon$ of the selected root attains a residual of at most $2\varepsilon$ and a straightness gap of order $\varepsilon$.}
Thus, the number of inner iterations may need to be adjusted depending on the compute budget and desired reconstruction accuracy.
The theory should therefore be interpreted as characterizing the limiting root selected by the update, rather than guaranteeing exact convergence under every finite compute budget.

\paragraph{Broader impacts and safeguards.}
SelFix improves inversion for rectified-flow text-to-image models, which may benefit reconstruction and source-preserving image editing in creative and assistive visual applications.
At the same time, improved inversion and editing fidelity may inherit misuse risks from the underlying generative model, such as deceptive image manipulation or unauthorized modification of visual content.
These risks are not unique to SelFix, as our method does not introduce a new generative backbone, training dataset, or user-facing deployment system.
The method is evaluated using existing pretrained models and benchmarks, and any released code would be limited to the inversion algorithm rather than a newly trained image generator. 
Therefore, safeguards for the underlying generative model, including responsible access policies, content filtering, provenance tracking, and usage restrictions, remain the primary mechanisms for mitigating potential misuse.

\section{{Algorithm}}
\label{app:algorithm}

{Algorithm~\ref{alg:straightness_fixed_point_inversion} states the complete SelFix inversion procedure. The outer loop over inversion steps $j$ builds the straightness anchor $a_{j,W}$ from previously recovered velocities, and the inner fixed-point loop applies the anchored update with decoupled momentum to select the straightness-minimizing root.}

\begin{algorithm}[t]
\small
\caption{{SelFix: Straightness-selected fixed-point inversion}}
\label{alg:straightness_fixed_point_inversion}
\begin{algorithmic}[1]
\Require Image latent $x_0$; timesteps $0=t_0<t_1<\cdots<t_N$ with $h_j=t_j-t_{j-1}$; velocity field $u_\theta(\cdot,t,c)$; conditioning $c$; fixed-point iterations $K$; window $W$; Schedule $\{\alpha_k\}_{k=1}^K$ (Eq.~\eqref{eqn:alpha-schedule}); momentum $\mu\in[0,1)$.
\Ensure Inverted latent $x_N$ and inversion trajectory $\{x_j\}_{j=0}^N$.
\For{$j=1,\ldots,N$} \Comment{{outer loop: inversion steps}}
    \State Define $r_j(z)\coloneqq f_{t_j\to t_{j-1}}(z)-x_{j-1}$ and $P_j(z)\coloneqq z-r_j(z)$. \Comment{Eqs.~\eqref{eq:inversion_residual},~\eqref{eq:fixed_point_map}}
    \Statex \hfill\textit{{(Euler reverse step: $P_j(z)=x_{j-1}+h_j\,u_\theta(z,t_j,c)$)}}
    \State $m_j\gets \min\{W,\,j-1\}$.
    \If{$m_j=0$} \Comment{fallback for $j=1$}
        \State $a_{j,W}\gets x_{j-1}+h_j\, u_\theta(x_{j-1},t_j,c)$.
    \Else
        \State $\bar v^{\,W}_{j-1}\gets \bigl(\textstyle\sum_{\ell=j-m_j}^{j-1} h_\ell\,\widehat v_\ell\bigr)\bigl/\bigl(\textstyle\sum_{\ell=j-m_j}^{j-1} h_\ell\bigr)$. \Comment{Eq.~\eqref{eq:causal_local_mean_velocity}}
        \State $a_{j,W}\gets x_{j-1}+h_j\,\bar v^{\,W}_{j-1}$. \Comment{Eq.~\eqref{eq:causal_local_straightness_anchor}}
    \EndIf
    \State $z^0\gets a_{j,W}$.
    \For{$k=1,\ldots,K$} \Comment{{inner loop: fixed-point iteration}}
        \State $q^k\gets P_j(z^{k-1})$.
        \State $M^k\gets q^k$ if $k=1$, else $M^k\gets \mu M^{k-1}+(1-\mu)\,q^k$. \Comment{Eq.~\eqref{eq:decoupled_momentum_state}}
        \State $z^k\gets \alpha_k\, a_{j,W}+(1-\alpha_k)\,M^k$. \Comment{Eq.~\eqref{eq:decoupled_momentum_anchor}}
    \EndFor
    \State $x_j\gets z^K$; \ \ $\widehat v_j\gets (x_j-x_{j-1})/h_j$.
\EndFor
\State \Return $x_N$ and $\{x_j\}_{j=0}^N$.
\end{algorithmic}
\end{algorithm}

\section{Details for Preliminary Analyses}
\label{app:initial-analysis}

\subsection{Multiple Fixed-Point Roots}
\label{app:multiroot-details}

For the analysis in Fig.~\ref{fig:multiroot}, we use 16 different initializations, where each point is obtained by adding different Gaussian noise to {the} previous state $x_{j-1}$.
The purpose of the analysis is to test whether the learned and discretized local inverse equation at a fixed inversion step admits multiple approximate fixed-point roots in practice.
We use {the FLUX.1-dev}~\cite{BlackForestLabs2024FLUX1} {model with} $T=40$ denoising steps, and conduct the analysis at the middle of denoising ($j=20$).
To prevent external error (\emph{i.e.}, error from {the} latent autoencoder or {a} mismatch between {the} real image distribution and {the} model-generated image distribution), we conduct the analysis {on a} synthesized image.
{Generation uses the prompt} \textit{a corgi wearing a blue bandana sitting on a park path}{, and inversion uses} $K=15$ iterations {with} Anderson acceleration {to boost} convergence.

At fixed-point iteration $k$, we report two quantities. The first is the mean fixed-point residual,
\begin{equation}
    R_k
    :=
    \frac{1}{N_{\mathrm{init}}}
    \sum_{n=1}^{N_{\mathrm{init}}}
    \left\|P_j(x^{k}_{j,n})-x^{k}_{j,n}\right\|_2 .
    \label{eq:app-multiroot-residual}
\end{equation}
The second is the mean pairwise distance among the current states,
\begin{equation}
    D_k
    :=
    \frac{2}{N_{\mathrm{init}}(N_{\mathrm{init}}-1)}
    \sum_{1\le n<m\le N_{\mathrm{init}}}
    \left\|x^{k}_{j,n}-x^{k}_{j,m}\right\|_2 .
    \label{eq:app-multiroot-pairwise}
\end{equation}
A small residual $R_k$ means that each run is close to satisfying the same learned local inverse equation. If, at the same time, $D_k$ remains much larger than $R_k$, then the converged states are not merely small numerical perturbations of a single point. Instead, they are distinct approximate fixed-point roots of the same learned and discretized local equation. This is the empirical phenomenon shown in Fig.~\ref{fig:multiroot}.

\subsection{Greedy Multi-Start Straightness Selection}
\label{app:greedy-selection-details}

This section describes the FPI+Selection sanity check shown in Fig.~\ref{fig:recon-straightness-vis}. {This is not the proposed method, but a non-causal diagnostic experiment used to test whether selecting a straighter fixed-point trajectory can improve reconstruction.}

The FPI+Selection variant performs best-of-$N$ selection at each denoising step: after the inner fixed-point iteration finishes, it chooses the candidate that induces the straightest inversion trajectory among {differently-initialized candidates}.
For each input, we generate $N_{\mathrm{cand}}=32$ candidate inverse trajectories. Each candidate is obtained by applying an independent random perturbation at the start of fixed-point inversion and then running the same fixed-point budget as the naive FPI baseline. The perturbation distribution is
\begin{equation}
    x^0_{\mathrm{cand},n}=x^0_{\mathrm{base}}+\sigma_{\mathrm{cand}}\epsilon_n,
    \qquad
    \epsilon_n\sim\mathcal N(0,I),
    \qquad
    n=1,\ldots,N_{\mathrm{cand}} .
\end{equation}
The perturbation scale $\sigma_{\mathrm{cand}}$ is {set} to one, and {this injection is applied at every denoising step, at the start of the inner fixed-point iteration}.
After all candidate trajectories are obtained, we compute the online trajectory straightness proxy $\phi^{\mathrm{str}}_{j,W}$ in Eq.~\eqref{eq:causal_local_straightness_selector} for each trajectory and select the trajectory with the smallest $\phi^{\mathrm{str}}_{j,W}$.
{The resulting algorithm is a zero-order greedy search, provided as empirical motivation for root selection.
Although inefficient and imperfect, this search illustrates that, among multiple feasible fixed-point inversion paths, choosing a straighter path can help produce a better reconstruction.}

\section{Proofs and Mathematical Details}
\label{app:proofs}

\subsection{{Notation and Standard Convergence Facts}}
\label{app:proof-notation}

All proof arguments are local to one inversion step. Fix $j\in\{1,\ldots,T\}$. To avoid conflict with the total number of timesteps $T$, we write
\begin{equation}
    P:=P_j,
    \qquad
    C:=C_j,
    \qquad
    S:=S_j^C=\operatorname{Fix}(P_j)\cap C_j,
    \qquad
    a:=a_{j,W}.
    \label{eq:app-proof-shorthand}
\end{equation}
The step index $j$ is omitted whenever this causes no ambiguity. Under Assumption~\ref{assump:local_root_selection}, $C$ is closed and convex, $P(C)\subset C$, $P$ is nonexpansive on $C$, and $S$ is nonempty. Since the fixed-point set of a nonexpansive map on a closed convex subset of a Hilbert space is closed and convex, $S$ is closed and convex. Hence the metric projection $\operatorname{proj}_S(a)$ is well-defined and unique.

We use the standard projection characterization: for a nonempty closed convex set $S$ and $q=\operatorname{proj}_S(a)$,
\begin{equation}
    \langle a-q,y-q\rangle\le 0,
    \qquad
    \forall y\in S .
    \label{eq:app-projection-characterization}
\end{equation}
All convergence is in the Euclidean norm of the latent space. The full inverse trajectory is obtained by applying the same per-step argument sequentially for $j=1,\ldots,T$.

{For convenience, we also record the two standard convergence theorems used in the proofs below.}

\begin{theorem}[{Halpern iteration; Bauschke 1996, Sec.~3, Thm.~3.1, single-map case}]
\label{thm:app-halpern-bauschke}
{Let $H$ be a real Hilbert space, $C\subset H$ a nonempty closed convex set, and $T:C\to C$ a nonexpansive map with $\operatorname{Fix}(T)\neq\emptyset$. Let $a,x^0\in C$, and let $(\lambda_k)\subset[0,1]$ satisfy}
\begin{equation}
    \lambda_k\to 0,
    \qquad
    \sum_{k=1}^{\infty}\lambda_k=\infty,
    \qquad
    \sum_{k=1}^{\infty}|\lambda_{k+1}-\lambda_k|<\infty .
    \label{eq:app-halpern-conditions-bauschke}
\end{equation}
{Then the sequence defined by}
\begin{equation}
    x^{k+1}=\lambda_{k+1}a+(1-\lambda_{k+1})T(x^k)
    \label{eq:app-halpern-bauschke-update}
\end{equation}
{converges strongly to $\operatorname{proj}_{\operatorname{Fix}(T)}(a)$~\cite{bauschke1996approximation}.}
\end{theorem}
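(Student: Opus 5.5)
The plan is the standard Wittmann/Xu argument for anchored iterations in a Hilbert space. Let $q:=\operatorname{proj}_{\operatorname{Fix}(T)}(a)$; it is well defined because $\operatorname{Fix}(T)$ is nonempty, closed and convex, as recalled in App.~\ref{app:proof-notation}. The argument proceeds in four steps: boundedness, asymptotic regularity, an asymptotic projection inequality, and a final recursive estimate for $\|x^k-q\|^2$. Throughout I will use the following sequence lemma (Xu's lemma). If $s_{k+1}\le(1-\gamma_k)s_k+\gamma_k\beta_k+\epsilon_k$ with $s_k\ge0$, $\gamma_k\in[0,1]$, $\sum\gamma_k=\infty$, $\limsup\beta_k\le0$ and $\sum\epsilon_k<\infty$, then $s_k\to0$. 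I will prove it once by an elementary telescoping argument, or simply cite it.

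\emph{Step 1 (boundedness).} Since $Tq=q$, nonexpansiveness gives
\[
\|x^{k+1}-q\|\le\lambda_{k+1}\|a-q\|+(1-\lambda_{k+1})\|x^k-q\|.
\]
By induction, $\|x^k-q\|\le\max\{\|x^0-q\|,\|a-q\|\}$ for all $k$. Hence $(x^k)$ and $(Tx^k)$ are bounded, say by a constant $M$ after centering.

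\emph{Step 2 (asymptotic regularity).} Subtracting consecutive updates gives
\[
x^{k+1}-x^k=(1-\lambda_{k+1})(Tx^k-Tx^{k-1})+(\lambda_{k+1}-\lambda_k)(a-Tx^{k-1}).
\]
Therefore
\[
\|x^{k+1}-x^k\|\le(1-\lambda_{k+1})\|x^k-x^{k-1}\|+M|\lambda_{k+1}-\lambda_k|.
\]
Xu's lemma applies with $\beta_k=0$, using $\sum\lambda_k=\infty$ and $\sum|\lambda_{k+1}-\lambda_k|<\infty$, so $\|x^{k+1}-x^k\|\to0$. Consequently
\[
\|x^k-Tx^k\|\le\|x^k-x^{k+1}\|+\lambda_{k+1}\|a-Tx^k\|\to0,
\]
because $\lambda_k\to0$ and $Tx^k$ is bounded. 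I expect this step to be the main obstacle. It is exactly where the summable-variation condition is needed, and the bookkeeping with the lemma must be done carefully.

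\emph{Step 3 (asymptotic projection inequality).} Choose a subsequence along which $\langle a-q,x^{k}-q\rangle$ attains its $\limsup$, and then a further subsequence converging weakly to some $z$. In our finite-dimensional latent space this convergence is strong. By Browder's demiclosedness principle for $I-T$ (which is trivial via continuity in finite dimensions), Step 2 gives $z\in\operatorname{Fix}(T)$. The projection characterization \eqref{eq:app-projection-characterization} then yields $\limsup_k\langle a-q,x^k-q\rangle=\langle a-q,z-q\rangle\le0$.

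\emph{Step 4 (strong convergence).} Write $x^{k+1}-q=(1-\lambda_{k+1})(Tx^k-q)+\lambda_{k+1}(a-q)$. Apply the inequality $\|u+v\|^2\le\|u\|^2+2\langle v,u+v\rangle$ with $u=(1-\lambda_{k+1})(Tx^k-q)$ and $v=\lambda_{k+1}(a-q)$, and then use nonexpansiveness. This gives
\[
\|x^{k+1}-q\|^2\le(1-\lambda_{k+1})\|x^k-q\|^2+2\lambda_{k+1}\langle a-q,x^{k+1}-q\rangle .
\]
Here I used $(1-\lambda)^2\le1-\lambda$. By Step 3, $\limsup_k\langle a-q,x^{k+1}-q\rangle\le0$. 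Xu's lemma with $\gamma_k=\lambda_{k+1}$, $\beta_k=2\langle a-q,x^{k+1}-q\rangle$ and $\epsilon_k=0$ then gives $\|x^k-q\|\to0$, i.e.\ $x^k\to\operatorname{proj}_{\operatorname{Fix}(T)}(a)$.
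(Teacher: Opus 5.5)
Your proof is correct. The paper does not prove this statement itself; it simply invokes Bauschke's Theorem 3.1 with a single map. Your four steps are the standard Wittmann/Bauschke-style argument behind that citation: boundedness, asymptotic regularity from the summable-variation condition, the $\limsup$ projection inequality via demiclosedness, and the Xu-type recursive estimate for $\|x^k-q\|^2$. For the general Hilbert-space statement, do not rely on the finite-dimensional remark; instead state that the iterates stay in $C$ (since $a\in C$, $T(C)\subset C$ and $C$ is convex) and that the weak cluster point $z$ lies in $C$ (closed convex sets are weakly closed) before applying Browder's demiclosedness principle.
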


\begin{theorem}[{Modified Halpern iteration; Cuntavepanit--Panyanak 2011, Sec.~3, Thm.~3.2}]
\label{thm:app-modified-halpern-cp}
{Let $C$ be a nonempty closed convex subset of a complete CAT(0) space, let $T:C\to C$ be nonexpansive with $F(T)\neq\emptyset$, and let $u\in C$, $x^1\in C$. Let $(\lambda_n),(\tau_n)\subset(0,1)$ satisfy}
\begin{equation}
    \lambda_n\to 0,
    \qquad
    \sum_{n=1}^{\infty}\lambda_n=\infty,
    \qquad
    0<\liminf_{n\to\infty}\tau_n\le\limsup_{n\to\infty}\tau_n<1 .
    \label{eq:app-modified-halpern-conditions}
\end{equation}
{Then the sequence defined by}
\begin{equation}
    x^{n+1}=\tau_n x^n\oplus(1-\tau_n)\bigl(\lambda_n u\oplus(1-\lambda_n)Tx^n\bigr)
    \label{eq:app-modified-halpern-update}
\end{equation}
{converges to the point of $F(T)$ nearest to $u$~\cite{cuntavepanit2011strong}. In a Euclidean or Hilbert space, $\oplus$ denotes the usual convex combination, $F(T)=\operatorname{Fix}(T)$, and the nearest point is $\operatorname{proj}_{\operatorname{Fix}(T)}(u)$.}
\end{theorem}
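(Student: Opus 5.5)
We will prove the statement in the Hilbert-space setting that the paper actually uses; in the CAT(0) case the same steps go through with the quasilinearization inequality $d(\lambda x\oplus(1-\lambda)y,z)^2\le\lambda d(x,z)^2+(1-\lambda)d(y,z)^2-\lambda(1-\lambda)d(x,y)^2$ in place of Hilbert identities. Write $y_n:=\lambda_n u+(1-\lambda_n)Tx^n$, so that $x^{n+1}=\tau_n x^n+(1-\tau_n)y_n$, and put $q:=\operatorname{proj}_{F(T)}(u)$. This point exists and is unique because $F(T)$ is nonempty, closed and convex, as recalled in App.~\ref{app:proof-notation}. The proof has four steps: boundedness, asymptotic regularity, a limsup inequality for the anchor term, and a Xu-type recursive lemma.

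\textbf{Step 1 (boundedness).} For any $p\in F(T)$, nonexpansiveness gives
\[
\|y_n-p\|\le\lambda_n\|u-p\|+(1-\lambda_n)\|x^n-p\|,
\]
and hence $\|x^{n+1}-p\|\le\max\{\|u-p\|,\|x^n-p\|\}$. By induction, $(x^n)$, $(y_n)$ and $(Tx^n)$ are all bounded.

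\textbf{Step 2 (asymptotic regularity).} This is the step I expect to be the main obstacle, because no summability of $|\lambda_{n+1}-\lambda_n|$ is assumed. The plan is to use Suzuki's lemma. If $x^{n+1}=\tau_n x^n+(1-\tau_n)y_n$ with $0<\liminf\tau_n\le\limsup\tau_n<1$, bounded sequences, and
\[
\limsup_n\bigl(\|y_{n+1}-y_n\|-\|x^{n+1}-x^n\|\bigr)\le0,
\]
then $\|y_n-x^n\|\to0$. We check the hypothesis directly. We have
\[
\|y_{n+1}-y_n\|\le\|Tx^{n+1}-Tx^n\|+|\lambda_{n+1}-\lambda_n|\,\|u\|+\lambda_{n+1}\|Tx^{n+1}\|+\lambda_n\|Tx^n\|.
\]
Since $\|Tx^{n+1}-Tx^n\|\le\|x^{n+1}-x^n\|$, and every remaining term tends to $0$ because $\lambda_n\to0$ and the sequences are bounded, the hypothesis holds. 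Suzuki's lemma therefore yields $\|y_n-x^n\|\to0$. Finally,
\[
\|x^n-Tx^n\|\le\|x^n-y_n\|+\lambda_n\|u-Tx^n\|\to0.
\]

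\textbf{Step 3 (limsup of the anchor term).} Choose a subsequence $(x^{n_i})$ that attains $\limsup_n\langle u-q,x^n-q\rangle$ and converges weakly to some $z$. In finite dimensions, which covers the latent space here, this convergence is strong. Demiclosedness of $I-T$, together with Step 2, gives $z\in F(T)$. The projection characterization~\eqref{eq:app-projection-characterization} then gives
\[
\limsup_n\langle u-q,x^n-q\rangle\le0,
\]
and the same bound holds with $x^n$ replaced by $y_n$ or $Tx^n$, since these differ from $x^n$ by terms tending to $0$.

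\textbf{Step 4 (strong convergence).} Use $\|a+b\|^2\le\|a\|^2+2\langle b,a+b\rangle$ with $a=(1-\lambda_n)(Tx^n-q)$ and $b=\lambda_n(u-q)$ to get
\[
\|y_n-q\|^2\le(1-\lambda_n)\|x^n-q\|^2+2\lambda_n\langle u-q,y_n-q\rangle.
\]
Convexity of $\|\cdot\|^2$ then gives
\[
\|x^{n+1}-q\|^2\le(1-\gamma_n)\|x^n-q\|^2+\gamma_n\beta_n,
\qquad
\gamma_n:=(1-\tau_n)\lambda_n,\quad
\beta_n:=2\langle u-q,y_n-q\rangle.
\]
Here $\sum_n\gamma_n=\infty$, because $1-\tau_n$ is eventually bounded below by a positive constant and $\sum_n\lambda_n=\infty$. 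Step 3 gives $\limsup_n\beta_n\le0$. Xu's lemma then yields $\|x^n-q\|\to0$, which is the claim.
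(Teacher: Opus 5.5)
Your Hilbert-space argument is correct and complete. It differs from the paper only in that the paper gives no argument at all: it records the Cuntavepanit--Panyanak theorem for complete CAT(0) spaces as a black box. It then invokes it only in the Euclidean latent space, in the proof of Theorem~\ref{thm:momentum_preserves_selection} (with $\tau_n\leftarrow b_k$, $\lambda_n\leftarrow\gamma_k$ after discarding a finite prefix).

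You instead supply the standard self-contained proof, and each estimate checks out:
boundedness by the max-induction;
asymptotic regularity through Suzuki's lemma, which needs only $|\lambda_{n+1}-\lambda_n|\to0$ and so avoids any summability assumption, matching the hypotheses as stated;
the limsup bound from demiclosedness of $I-T$ and the projection inequality~\eqref{eq:app-projection-characterization};
and Xu's lemma with $\gamma_n=(1-\tau_n)\lambda_n$.
Step~3 works in any Hilbert space, not just finite dimensions, since bounded sequences have weakly convergent subsequences and $C$ is weakly closed. Your route gives a transparent, checkable justification of exactly the case the paper uses; the citation gives generality.

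The one point to drop or repair is your aside that the CAT(0) case follows by ``the same steps'' using $d(\lambda x\oplus(1-\lambda)y,z)^2\le\lambda d(x,z)^2+(1-\lambda)d(y,z)^2-\lambda(1-\lambda)d(x,y)^2$. That is the CN inequality, not quasilinearization, and it does not replace what Steps~3 and~4 actually use: inner products, weak compactness, and the variational characterization of the projection. A genuine CAT(0) proof needs $\Delta$-convergence with a $\Delta$-demiclosedness principle, a metric version of Suzuki's lemma, and a rewritten Step~4. For example, the CN inequality gives $d(y_n,q)^2\le(1-\lambda_n)d(x^n,q)^2+\lambda_n\bigl[d(u,q)^2-(1-\lambda_n)d(u,Tx^n)^2\bigr]$. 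The bracket has nonpositive limsup because $d(u,\cdot)$ is $\Delta$-lower semicontinuous, every $\Delta$-cluster point of $(Tx^n)$ lies in $F(T)$, and $d(u,\cdot)\ge d(u,q)$ on $F(T)$.
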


\subsection{Algebra of the Straightness Selector}
\label{app:straightness-selector-algebra}

This subsection expands the straightness selector used in Sec.~\ref{sec:method}. We use the unique selector from the method section. The window size $W$ denotes the number of previous velocities used to estimate the local mean velocity. For steps where the full window is available,
\begin{equation}
    \bar v^W_{j-1}
    :=
    \frac{\sum_{i=1}^{W} h_{j-i}v_{j-i}}
         {\sum_{i=1}^{W} h_{j-i}},
    \qquad
    v_{j-i}:=\frac{x_{j-i}-x_{j-i-1}}{h_{j-i}} .
    \label{eq:app-windowed-past-velocity}
\end{equation}
For early inversion steps where not enough previous velocities are available, the implementation includes as many velocities as possible from the stored trajectories.

For a candidate current state $x$ at time $t_j$, the candidate velocity is
\begin{equation}
    v_j(x):=\frac{x-x_{j-1}}{h_j}.
\end{equation}
The causal local straightness selector is
\begin{equation}
    \phi^{\mathrm{str}}_{j,W}(x)
    :=
    h_j\left\|v_j(x)-\bar v^W_{j-1}\right\|_2^2 .
    \label{eq:app-selector-definition}
\end{equation}
Substituting $v_j(x)$ gives
\begin{align}
    \phi^{\mathrm{str}}_{j,W}(x)
    &=
    h_j
    \left\|
        \frac{x-x_{j-1}}{h_j}
        -
        \bar v^W_{j-1}
    \right\|_2^2
    \label{eq:app-selector-algebra-1}
    \\
    &=
    \frac{1}{h_j}
    \left\|
        x-
        \left(x_{j-1}+h_j\bar v^W_{j-1}\right)
    \right\|_2^2 .
    \label{eq:app-selector-algebra-2}
\end{align}
Thus the straightness anchor is
\begin{equation}
    a_{j,W}
    :=
    x_{j-1}+h_j\bar v^W_{j-1},
    \label{eq:app-straightness-anchor}
\end{equation}
and the selector is exactly
\begin{equation}
    \phi^{\mathrm{str}}_{j,W}(x)
    =
    \frac{1}{h_j}\left\|x-a_{j,W}\right\|_2^2 .
    \label{eq:app-selector-quadratic}
\end{equation}
Since $h_j>0$, minimizing $\phi^{\mathrm{str}}_{j,W}$ over any set is equivalent to minimizing the squared distance to $a_{j,W}$ over the same set. In particular,
\begin{equation}
    \arg\min_{x\in S_j^C}\phi^{\mathrm{str}}_{j,W}(x)
    =
    \arg\min_{x\in S_j^C}\left\|x-a_{j,W}\right\|_2^2
    =
    \left\{\operatorname{proj}_{S_j^C}(a_{j,W})\right\}.
    \label{eq:app-selector-projection}
\end{equation}
This identity is the algebraic link between causal straightness minimization and projection-selected fixed-point inversion.

\subsection{Verification of the Implemented Anchor Schedule}
\label{app:alpha-schedule-verification}

In this work, we use {the following anchor schedule, identical to} Eq.~\eqref{eqn:alpha-schedule}:
\begin{equation}
    \alpha_k=\alpha_1\frac{\delta}{(k-1)+\delta},
    \qquad k\ge 1,\ \ 0<\alpha_1<1,\ \ \delta>0,
    \label{eq:app-alpha-schedule}
\end{equation}
Below we show the schedule satisfies the Halpern conditions in Eq.~\eqref{eq:app-halpern-conditions}.
We can easily see $\alpha_k\in(0,1)$ and $\alpha_k\to0$. Moreover,
\begin{equation}
    \sum_{k=1}^{\infty}\alpha_k
    =
    \alpha_1\delta\sum_{k=1}^{\infty}\frac{1}{(k-1)+\delta}
    =
    \infty,
\end{equation}
because the sum is harmonic up to a finite shift. Finally, $(\alpha_k)$ is monotone decreasing, so
\begin{equation}
    \sum_{k=2}^{\infty}|\alpha_k-\alpha_{k-1}|
    =
    \alpha_1-\lim_{k\to\infty}\alpha_k
    =
    \alpha_1
    <
    \infty .
\end{equation}

\subsection{Proof of Theorem~\ref{thm:straightness_anchored_fpi}}
\label{app:anchored_fpi_proof}

\begin{theorem*}[{Restatement of Theorem~\ref{thm:straightness_anchored_fpi}}]
{Under Assumption~\ref{assump:local_root_selection} and with $\alpha_k$ satisfying Eq.~\eqref{eq:vanishing_anchor_schedule}, the anchored fixed-point iteration}
\begin{equation*}
    x_j^k=\alpha_k a_{j,W}+(1-\alpha_k)P_j(x_j^{k-1})
\end{equation*}
{converges to}
\begin{equation*}
    x_j^\star
    \in
    \arg\min_{x\in S_j^C}
    \phi^{\mathrm{str}}_{j,W}(x).
\end{equation*}
\end{theorem*}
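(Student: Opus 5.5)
The proof reduces the statement to the classical Halpern convergence result, Theorem~\ref{thm:app-halpern-bauschke}, combined with the selector algebra of App.~\ref{app:straightness-selector-algebra}. We work with the single-step shorthand $P,C,S,a$ of Eq.~\eqref{eq:app-proof-shorthand}. The argument checks the hypotheses of Theorem~\ref{thm:app-halpern-bauschke} and then rewrites its limit.

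First we verify the hypotheses. Take $H=\mathbb{R}^d$ with the Euclidean inner product. By Assumption~\ref{assump:local_root_selection}, $C$ is nonempty, closed and convex, since it contains $a$. The restriction $T:=P|_C$ maps $C$ into $C$, is nonexpansive there, and satisfies $\operatorname{Fix}(T)=\operatorname{Fix}(P)\cap C=S\neq\emptyset$. The assumption also gives $a,x^0\in C$.

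We then match indices. Set $\lambda_k:=\alpha_k$. The update $x^k=\alpha_k a+(1-\alpha_k)P(x^{k-1})$ is exactly Eq.~\eqref{eq:app-halpern-bauschke-update} after the shift $k\mapsto k+1$. By induction, using convexity of $C$ and $P(C)\subset C$, every iterate stays in $C$, so $P$ is only ever evaluated where nonexpansiveness holds. The conditions in Eq.~\eqref{eq:vanishing_anchor_schedule} give $\lambda_k\to0$ and $\sum\lambda_k=\infty$. They also give $\sum|\lambda_{k+1}-\lambda_k|<\infty$, which is the same sum reindexed. Hence Eq.~\eqref{eq:app-halpern-conditions-bauschke} holds, and Theorem~\ref{thm:app-halpern-bauschke} yields $x^k\to\operatorname{proj}_{S}(a)$.

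Finally we identify the limit. As noted in App.~\ref{app:proof-notation}, $S$ is closed and convex as the fixed-point set of a nonexpansive self-map of a closed convex set, so the projection exists and is unique. By Eq.~\eqref{eq:app-selector-projection}, which uses $h_j>0$ and the quadratic form $\phi^{\mathrm{str}}_{j,W}(x)=h_j^{-1}\|x-a\|^2$, we have $\operatorname{proj}_S(a)=\arg\min_{x\in S_j^C}\phi^{\mathrm{str}}_{j,W}(x)$. Therefore $x_j^k\to x_j^\star$, and $x_j^\star$ is in fact the unique minimizer.

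The only delicate point is ensuring that the anchored iterates never leave $C$, because nonexpansiveness is only local. The induction on convex combinations handles this. Matching the indexing of the schedule conditions with Bauschke's statement is routine.
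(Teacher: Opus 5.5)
Your proposal is correct and follows essentially the same route as the paper. Like the paper, you apply the single-map Halpern theorem (Theorem~\ref{thm:app-halpern-bauschke}) with $T=P_j$, $\lambda_k=\alpha_k$, and anchor $a_{j,W}$ to obtain $x_j^k\to\operatorname{proj}_{S_j^C}(a_{j,W})$, and then use $\phi^{\mathrm{str}}_{j,W}(x)=h_j^{-1}\|x-a_{j,W}\|^2$ to identify that projection as the straightness minimizer. Your explicit checks (invariance of $C_j$ under the update, $\operatorname{Fix}(P_j|_{C_j})=S_j^C$, and the index shift in the schedule conditions) only spell out details the paper leaves implicit.
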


\begin{proof}
Fix an inversion step $j$ and use the shorthand in Eq.~\eqref{eq:app-proof-shorthand}. Under Assumption~\ref{assump:local_root_selection}, $C$ is closed and convex, $P(C)\subset C$, $P$ is nonexpansive on $C$, $a,x^0\in C$, and $S=\operatorname{Fix}(P)\cap C$ is nonempty. Hence $\operatorname{proj}_S(a)$ is well-defined and unique.

The anchored fixed-point iteration in the main text can be written as
\begin{equation}
    x^k
    =
    \alpha_k a+(1-\alpha_k)P(x^{k-1}),
    \qquad
    k\ge 1,
    \label{eq:app-halpern-local}
\end{equation}
where the schedule $(\alpha_k)$ satisfies
\begin{equation}
    \alpha_k\in(0,1),
    \qquad
    \alpha_k\to0,
    \qquad
    \sum_{k=1}^{\infty}\alpha_k=\infty,
    \qquad
    \sum_{k=2}^{\infty}|\alpha_k-\alpha_{k-1}|<\infty
    \label{eq:app-halpern-conditions}
\end{equation}
by Eq.~\eqref{eq:vanishing_anchor_schedule}. {Applying Theorem~\ref{thm:app-halpern-bauschke} with $T=P$, anchor $a=a_{j,W}$, and $\lambda_k=\alpha_k$, we obtain}
\begin{equation}
    x^k\to q:=\operatorname{proj}_S(a).
    \label{eq:app-halpern-limit}
\end{equation}
By Eq.~\eqref{eq:app-selector-quadratic},
\begin{equation}
    \phi^{\mathrm{str}}_{j,W}(x)
    =
    \frac{1}{h_j}\left\|x-a_{j,W}\right\|_2^2 ,
\end{equation}
so minimizing $\phi^{\mathrm{str}}_{j,W}$ over $S_j^C$ is identical to projecting $a_{j,W}$ onto $S_j^C$. Hence
\begin{equation}
    q
    =
    \operatorname{proj}_{S_j^C}(a_{j,W})
    \in
    \arg\min_{x\in S_j^C}
    \phi^{\mathrm{str}}_{j,W}(x).
\end{equation}
The limit is therefore an exact local inverse root selected by the causal straightness criterion. This proves Theorem~\ref{thm:straightness_anchored_fpi}.
\end{proof}
\subsection{Proof of Theorem~\ref{thm:momentum_preserves_selection}}
\label{app:momentum_proof}

\begin{theorem*}[{Restatement of Theorem~\ref{thm:momentum_preserves_selection}}]
{Under the same assumptions as Theorem~\ref{thm:straightness_anchored_fpi}, with $\alpha_k$ in Eq.~\eqref{eqn:alpha-schedule} and momentum weight $\mu\in[0,1)$, the decoupled-momentum iteration}
\begin{align*}
    M_j^k &= \mu M_j^{k-1}+(1-\mu)P_j(x_j^{k-1}), \\
    x_j^k &= \alpha_k a_{j,W}+(1-\alpha_k)M_j^k,
\end{align*}
{initialized with $M_j^0=x_j^0$, satisfies}
\begin{equation*}
    x_j^k\to x_j^\star,
    \qquad
    x_j^\star\in\arg\min_{x\in S_j^C}\phi^{\mathrm{str}}_{j,W}(x).
\end{equation*}
\end{theorem*}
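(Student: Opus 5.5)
The plan is to remove the auxiliary momentum variable, so that the decoupled-momentum update becomes a single recursion in $x^k:=x_j^k$. That recursion should then be an instance of the modified Halpern iteration of Theorem~\ref{thm:app-modified-halpern-cp}, specialized to Euclidean space. We use the shorthand of Eq.~\eqref{eq:app-proof-shorthand}. The case $\mu=0$ is exactly the Halpern iteration, so it is already covered by Theorem~\ref{thm:straightness_anchored_fpi}. We therefore assume $\mu\in(0,1)$.

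First, we check invariance. Since $M^0=x^0\in C$, $a\in C$, $P(C)\subset C$ and $C$ is convex, induction shows that $M^k\in C$ and $x^k\in C$ for all $k$, because both are convex combinations of points of $C$.

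Second, we eliminate $M$. For $k\ge 2$, the relation $x^{k-1}=\alpha_{k-1}a+(1-\alpha_{k-1})M^{k-1}$ gives $M^{k-1}=(x^{k-1}-\alpha_{k-1}a)/(1-\alpha_{k-1})$. Substituting this into Eqs.~\eqref{eq:decoupled_momentum_state}--\eqref{eq:decoupled_momentum_anchor} yields
\begin{equation*}
    x^k=\tau_k x^{k-1}+c_k a+(1-\alpha_k)(1-\mu)P(x^{k-1}),
    \qquad
    \tau_k:=\frac{\mu(1-\alpha_k)}{1-\alpha_{k-1}},
    \qquad
    c_k:=\alpha_k-\tau_k\alpha_{k-1}.
\end{equation*}
A direct computation shows that the three coefficients sum to one. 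Setting $\lambda_k:=c_k/(1-\tau_k)$, we get $(1-\tau_k)(1-\lambda_k)=(1-\alpha_k)(1-\mu)$. The recursion is then exactly
\begin{equation*}
    x^k=\tau_k x^{k-1}+(1-\tau_k)\bigl(\lambda_k a+(1-\lambda_k)P(x^{k-1})\bigr),
\end{equation*}
which has the form of Eq.~\eqref{eq:app-modified-halpern-update} with $u=a$ and $T=P$.

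Third, we verify the hypotheses of Eq.~\eqref{eq:app-modified-halpern-conditions} for the schedule in Eq.~\eqref{eqn:alpha-schedule}. Since $\alpha_k\to0$, we get $\tau_k\to\mu\in(0,1)$, which handles the condition on $\tau$. Because $c_k\le\alpha_k\to0$ and $1-\tau_k\to1-\mu>0$, we get $\lambda_k\to0$. For divergence of the sum, write
\begin{equation*}
    c_k=\alpha_k(1-\tau_k)-\tau_k(\alpha_{k-1}-\alpha_k).
\end{equation*}
For this schedule, $\alpha_{k-1}-\alpha_k=\alpha_1\delta/\bigl((k-2+\delta)(k-1+\delta)\bigr)=O(k^{-2})$, which is summable, whereas $\sum_k\alpha_k(1-\tau_k)=\infty$. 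Hence $\sum_k\lambda_k=\infty$.

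The main obstacle is that the theorem requires $\lambda_k\in(0,1)$, which means $c_k>0$. This is equivalent to $\alpha_k(1-\alpha_{k-1})>\mu(1-\alpha_k)\alpha_{k-1}$, and it can fail for small $k$ when $\delta$ is small. For example, $\alpha_2/\alpha_1=\delta/(1+\delta)$ may be below $\mu$. The condition does hold for all $k\ge k_0$ for some $k_0$, because $\alpha_k/\alpha_{k-1}\to1>\mu$ and $\alpha_k\to0$. The plan is to discard the first $k_0$ iterations. By the invariance step, $x^{k_0}\in C$, so we can restart the index and apply Theorem~\ref{thm:app-modified-halpern-cp} with initial point $x^{k_0}$. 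The theorem's conclusions are insensitive to finitely many initial terms, and $\lambda_k<1$ is automatic since $c_k<\alpha_k\le 1-\tau_k$ for large $k$. This gives $x^k\to\operatorname{proj}_S(a)$.

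Finally, by Eq.~\eqref{eq:app-selector-projection}, $\operatorname{proj}_S(a)$ is the unique minimizer of $\phi^{\mathrm{str}}_{j,W}$ over $S_j^C$, which proves the claim.
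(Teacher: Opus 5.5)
Your proposal is correct and follows the paper's proof essentially step for step. You eliminate $M^{k}$ via $M^{k}=(x^{k}-\alpha_k a)/(1-\alpha_k)$, regroup into the Cuntavepanit--Panyanak modified Halpern form (your $\tau_k,c_k$ are the paper's $\rho_{k-1},\eta_{k-1}$ after an index shift), and discard a finite prefix on which the coefficients may leave $(0,1)$; the only cosmetic difference is that you get $\sum_k\lambda_k=\infty$ from the summability of $\alpha_{k-1}-\alpha_k$, whereas the paper shows $\gamma_k/\alpha_k\to1$, and both routes are valid.
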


\begin{proof}
{The strategy is to eliminate the momentum variable for $k\ge1$ using the anchored relation $x^k=\alpha_k a+(1-\alpha_k)M^k$, recast the resulting one-state recursion in the modified-Halpern form of Theorem~\ref{thm:app-modified-halpern-cp}, and verify its coefficient hypotheses. The case $\mu=0$ reduces to Theorem~\ref{thm:straightness_anchored_fpi}, so we focus on $0<\mu<1$.}

Fix $j$ and write $P:=P_j$, $C:=C_j$, $S:=S_j^C$, $a:=a_{j,W}$. The implemented recursion is
\begin{equation}
    M^{k+1}=\mu M^k+(1-\mu)P(x^k),
    \qquad
    x^k=\alpha_k a+(1-\alpha_k)M^k,
    \qquad
    M^0=x^0\in C,
    \label{eq:app-momentum-recursion}
\end{equation}
with $\mu\in[0,1)$. Since $C$ is convex, $a\in C$, and $P(C)\subset C$, induction gives $M^k,x^k\in C$ for all $k$.

{For $k\ge 1$, the anchored relation inverts to}
\begin{equation}
    M^k=\frac{x^k-\alpha_k a}{1-\alpha_k},
    \qquad k\ge 1,
    \label{eq:app-Mk-from-xk}
\end{equation}
{Substituting Eq.~\eqref{eq:app-Mk-from-xk} into the momentum and anchor updates gives}
\begin{equation}
    x^{k+1}=\eta_k a+\rho_k x^k+\sigma_k P(x^k),
    \qquad k\ge 1,
    \label{eq:app-onestate}
\end{equation}
{with}
\begin{equation}
    \rho_k=\frac{\mu(1-\alpha_{k+1})}{1-\alpha_k},
    \quad
    \sigma_k=(1-\mu)(1-\alpha_{k+1}),
    \quad
    \eta_k=\alpha_{k+1}-\frac{\mu(1-\alpha_{k+1})\alpha_k}{1-\alpha_k},
\end{equation}
{summing to $\eta_k+\rho_k+\sigma_k=1$. Setting $b_k:=\rho_k$ and $\gamma_k:=\eta_k/(1-\rho_k)$, this regroups as}
\begin{equation}
    x^{k+1}=b_k\,x^k+(1-b_k)\bigl(\gamma_k a+(1-\gamma_k)P(x^k)\bigr),
    \label{eq:app-modified-halpern-form}
\end{equation}
{which matches Theorem~\ref{thm:app-modified-halpern-cp} with $\tau_n\!\leftarrow\!b_k$, $\lambda_n\!\leftarrow\!\gamma_k$, anchor $u=a$, and map $T=P$ in Euclidean space.}

{It remains to verify that $b_k,\gamma_k\in(0,1)$ for all sufficiently large $k$ and that they satisfy the hypotheses of Theorem~\ref{thm:app-modified-halpern-cp}. The explicit schedule $\alpha_k=\alpha_1\delta/((k-1)+\delta)$ with $0<\alpha_1<1$ and $\delta>0$ has three structural properties that make this verification immediate:}
\begin{enumerate}[label=(\roman*),leftmargin=*]
    \item {$\alpha_k$ is strictly decreasing with $0<\alpha_k\le\alpha_1<1$ for all $k\ge 1$, and thus $1-\alpha_k\ge 1-\alpha_1>0$;}
    \item {$\alpha_k\to 0$;}
    \item {$\alpha_{k+1}/\alpha_k=(k-1+\delta)/(k+\delta)\to 1$.}
\end{enumerate}
{Combining (ii) and (iii),}
\begin{equation}
    b_k
    =
    \mu\,\frac{1-\alpha_{k+1}}{1-\alpha_k}
    \;\longrightarrow\;\mu,
    \qquad
    \frac{\eta_k}{\alpha_k}
    =
    \frac{\alpha_{k+1}}{\alpha_k}-\mu\,\frac{1-\alpha_{k+1}}{1-\alpha_k}
    \;\longrightarrow\;1-\mu .
\end{equation}
{Both limits lie strictly inside $(0,1)$ because $\mu\in(0,1)$, so $b_k\in(0,1)$ and $\eta_k>0$ hold for all sufficiently large $k$.
Together with $\sigma_k=(1-\mu)(1-\alpha_{k+1})>0$ (always) and $1-\rho_k=\eta_k+\sigma_k$, this also gives $\gamma_k=\eta_k/(1-\rho_k)\in(0,1)$ eventually.
Discarding the finite prefix we have $b_k,\gamma_k\in(0,1)$ for all sufficiently large $k$.
Moreover,}
\begin{equation}
    \frac{\gamma_k}{\alpha_k}
    =
    \frac{\eta_k/\alpha_k}{1-\rho_k}
    \;\longrightarrow\;\frac{1-\mu}{1-\mu}=1,
\end{equation}
{so $\gamma_k\sim\alpha_k$. Therefore $0<\liminf_k b_k=\limsup_k b_k=\mu<1$, $\gamma_k\to 0$, and $\sum_k\gamma_k=\infty$ since $\sum_k\alpha_k=\infty$ (App.~\ref{app:alpha-schedule-verification}).}

{Theorem~\ref{thm:app-modified-halpern-cp} applied to Eq.~\eqref{eq:app-modified-halpern-form} therefore gives $x^k\to q:=\operatorname{proj}_S(a)$, and Eq.~\eqref{eq:app-Mk-from-xk} together with $\alpha_k\to 0$ gives $M^k\to q$ as well. The quadratic selector identity Eq.~\eqref{eq:app-selector-quadratic} then yields}
\begin{equation}
    q=\operatorname{proj}_{S_j^C}(a_{j,W})\in\arg\min_{x\in S_j^C}\phi^{\mathrm{str}}_{j,W}(x),
\end{equation}
{completing the proof.}
\end{proof}

\subsection{{Finite-Distance Bound for Approximate Iterates}}
\label{app:approximate-iterate-bound}

{While our convergence guarantees are asymptotic, in practice the fixed-point iteration runs for only finitely many steps, and the recovered iterate retains a nonzero residual. The following remark shows that, even in this finite regime, the distance from the iterate to the selected fixed point controls both the residual and the gap to the straightness optimum.}

\begin{remark}[{Finite-distance bound for approximate iterates}]
{Let $q_j$ denote the selected fixed point, and let $x_j^k$ be a finite inner iterate lying in the local region where $P_j$ is nonexpansive. Define}
\begin{equation*}
    d_k := \|x_j^k-q_j\|,
    \qquad
    R_j := \|q_j-a_{j,W}\|.
\end{equation*}
{Since $q_j\in\operatorname{Fix}(P_j)$, the residual satisfies}
\begin{equation*}
    \|r_j(x_j^k)\|
    =
    \|P_j(x_j^k)-x_j^k\|
    \le
    \|P_j(x_j^k)-P_j(q_j)\|+\|x_j^k-q_j\|
    \le
    2d_k .
\end{equation*}
{Moreover, for the straightness selector}
\begin{equation*}
    \phi^{\mathrm{str}}_{j,W}(x)=\frac{1}{h_j}\|x-a_{j,W}\|^2,
\end{equation*}
{we have}
\begin{equation*}
    \phi^{\mathrm{str}}_{j,W}(x_j^k)
    -
    \phi^{\mathrm{str}}_{j,W}(q_j)
    =
    \frac{1}{h_j}
    \left(
    2\langle q_j-a_{j,W},\,x_j^k-q_j\rangle
    +
    \|x_j^k-q_j\|^2
    \right),
\end{equation*}
{and therefore}
\begin{equation*}
    \phi^{\mathrm{str}}_{j,W}(x_j^k)
    \le
    \min_{s\in S_j^C}\phi^{\mathrm{str}}_{j,W}(s)
    +
    \frac{2R_j d_k+d_k^2}{h_j}.
\end{equation*}
{In particular, if $d_k\le \varepsilon$, then}
\begin{equation*}
    \|r_j(x_j^k)\|\le 2\varepsilon,
    \qquad
    \phi^{\mathrm{str}}_{j,W}(x_j^k)
    \le
    \min_{s\in S_j^C}\phi^{\mathrm{str}}_{j,W}(s)
    +
    \frac{2R_j\varepsilon+\varepsilon^2}{h_j}.
\end{equation*}
{Hence finite iterates sufficiently close to the selected fixed point are both approximately feasible and near-optimal in straightness.}
\end{remark}

\section{Experimental Details}
\label{app:exp-details}

\begin{table}[t]
\centering
\small
\caption{Default hyperparameters for the main reconstruction experiments.}
\label{tab:app-default-hparams}
\begin{tabular}{ll}
\toprule
\textbf{Quantity} & \textbf{Value} \\
\midrule
Backbone & FLUX.1-dev \\
Number of timesteps $T$ & 15 \\
Base solver & Euler \\
Fixed-point iterations $K$ & 10 \\
Previous-velocity window size $W$ & 1 \\
Momentum coefficient $\mu$ & 0.5 \\
Initial anchor weight $\alpha_1$ & 0.5 \\

Decay parameter $\delta$ & $1/8$ \\
Guidance scale & 1 \\
\bottomrule
\end{tabular}
\end{table}

\begin{table}[t]
\centering
\small
\caption{Default hyperparameters for the main editing experiments.}
\label{tab:app-default-hparams-edit}
\begin{tabular}{ll}
\toprule
\textbf{Quantity} & \textbf{Value} \\
\midrule
Backbone & FLUX.1-dev \\
Number of timesteps $T$ & 15
\\
Base solver & Euler \\
Fixed-point iterations $K$ & 4 \\
Previous-velocity window size $W$ & 1 \\
Momentum coefficient $\mu$ & 0.1 \\
Initial anchor weight $\alpha_1$ & 0.5 \\

Decay parameter $\delta$ & 2 \\
Guidance scale & 2 \\
\bottomrule
\end{tabular}
\end{table}

The default hyperparameters used in the main reconstruction and editing {experiments} can be found in Tab.~\ref{tab:app-default-hparams} and Tab.~\ref{tab:app-default-hparams-edit}, respectively.

\paragraph{Baselines}
\label{app:baseline-details}
\begin{table}[!t]
\centering
\footnotesize
\setlength{\tabcolsep}{4pt}
\caption{NFE budget used for inversion and generation. For fixed-point inversion methods, the inversion cost is computed as number of inversion steps times number of fixed-point iterations.}
\label{tab:nfe-budget}
\begin{tabular}{@{}llcccc@{}}
\toprule
\textbf{Experiment} 
& \textbf{Method} 
& \textbf{Inversion} 
& \textbf{Generation} 
& \textbf{Extra} 
& \textbf{Total NFE} \\
\midrule

\multirow{4}{*}{Reconstruction}
& Fixed-point inversions 
& $15 \times 10 = 150$ 
& $15$ 
& $0$ 
& $165$ \\
& ReFlow 
& $83$ 
& $83$ 
& $0$ 
& $166$ \\
& RF-Solver 
& $42 \times 2 = 84$ 
& $42 \times 2 = 84$ 
& $0$ 
& $168$ \\
& FireFlow 
& $83$ 
& $83$ 
& $2$ 
& $168$ \\
\midrule

\multirow{4}{*}{Editing}
& Fixed-point inversions 
& $15 \times 4 = 60$ 
& $15$ 
& $0$ 
& $75$ \\
& ReFlow 
& $38$ 
& $38$ 
& $0$ 
& $76$ \\
& RF-Solver 
& $19 \times 2 = 38$ 
& $19 \times 2 = 38$ 
& $0$ 
& $76$ \\
& FireFlow 
& $38$ 
& $38$ 
& $2$ 
& $78$ \\

\bottomrule
\end{tabular}
\end{table}

In our experiments, we compare SelFix with ReFlow, RF-Solver, FireFlow, ReNoise, AIDI, and naive FPI.
For fair comparison, we approximately match the total NFE across inversion methods by increasing the number of denoising steps for ReFlow, RF-Solver, and FireFlow, as shown in Tab.~\ref{tab:nfe-budget}{.}
In reconstruction, fixed-point inversion methods, including SelFix, use $165$ NFEs: $150$ for inversion with $15$ steps and $10$ fixed-point iterations, followed by $15$ NFEs for generation.
We therefore use $83$ steps per direction for ReFlow, $42$ steps per direction for RF-Solver, and $83$ steps per direction for FireFlow, resulting in $166$, $168$, and $168$ total NFEs, respectively.
In editing, fixed-point inversion methods use $75$ NFEs: $60$ for inversion with $15$ steps and $4$ iterations, followed by $15$ NFEs for generation.
We accordingly use $38$ steps per direction for ReFlow and FireFlow, and $19$ steps per direction for RF-Solver, resulting in $76$, $78$, and $76$ total NFEs, respectively.

{To implement ReNoise on the Flux backbone, we try various settings for the accumulation weighting scheme and the editability enhancement step, and report the optimal setting.
Specifically, we compare the weighting strategy proposed in the original paper against a uniform weighting scheme.
For the editability enhancement step, we try the hyperparameters of the original paper ($\lambda_{pair}$: 10, $\lambda_{patch-KL}$: 0.055), but obtain the best result by disabling it.
For the editing experiment, which uses a 4-iteration setting, the original paper provides no predefined configuration, so we adopt the optimal setting from the reconstruction experiment.}
When implementing AIDI, we use the suggested \texttt{AIDI\_E} variant.

\paragraph{Editing Pipeline}
\label{app:editing-pipeline-details}

\begin{table}[!t]
\centering
\caption{Hyperparameter setting in Editing task.}
\begin{tabular}{l|c}
\toprule
\textbf{Argument} & \textbf{Euler} \\
\midrule
\texttt{--start\_layer\_index} & 0 \\
\texttt{--end\_layer\_index}   & 37 \\
\texttt{--inject}            & 1 \\
\texttt{--reuse\_v}          & 1 \\
\texttt{--editing\_strategy} & \texttt{replace\_v} \\
\texttt{--qkv\_ratio}        & \texttt{1.0,1.0,1.0} \\
\bottomrule
\end{tabular}
\label{tab:app-feature-injection}
\end{table}

For all inversion methods, the downstream editing procedure is kept fixed so that the comparison isolates the inversion method.
{Our} implementation uses value feature injection: the inverted source trajectory is used as the source trajectory, and the target prompt is used during the editing pass.
Feature injection settings are shared across all inversion baselines.
We use the official repository of FireFlow~\cite{deng2025fireflow} to implement the feature injection, {with} hyperparameters set as in Tab.~\ref{tab:app-feature-injection}.

\subsection{Compute and Runtime}
\label{app:compute-runtime-details}

{All experiments are conducted in a GPU cloud environment, with each experiment running on a single H200 GPU.}
When using SelFix for reconstruction and editing tasks, processing a single image takes roughly 11 seconds and 5.5 seconds, respectively.

\section{Additional Experimental Results}
\label{app:additional-results}

\subsection{Additional Reconstruction Qualitative Results}
\label{app:additional-reconstruction}
We provide more reconstruction examples from PIE-Bench, expanding Fig.~\ref{fig:qual-recon}.
The results are in Fig.~\ref{fig:app-additional-recon-grid}.

\subsection{Additional Editing Qualitative Results}
\label{app:additional-editing}

We provide more prompt-based editing examples from PIE-Bench, expanding Fig.~\ref{fig:qual-edit}.
The results are in Fig.~\ref{fig:app-additional-editing-grid}.

\begin{figure}[p]
    \centering
    \includegraphics[height=0.95\textheight, keepaspectratio]{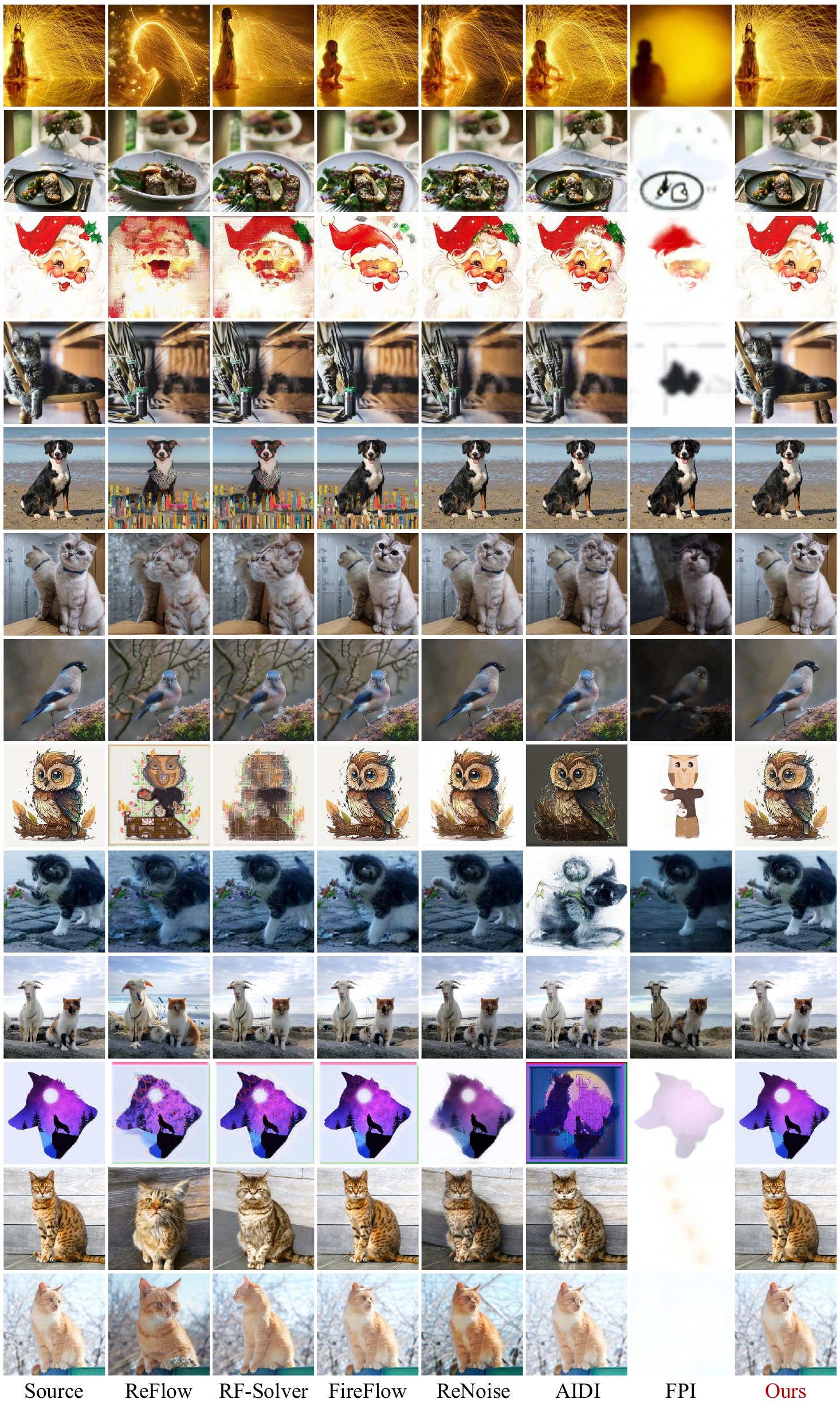}
    \caption{Additional qualitative results for {the} image reconstruction task.}
    \label{fig:app-additional-recon-grid}
\end{figure}

\begin{figure}[p]
    \centering
    \includegraphics[height=0.95\textheight, keepaspectratio]{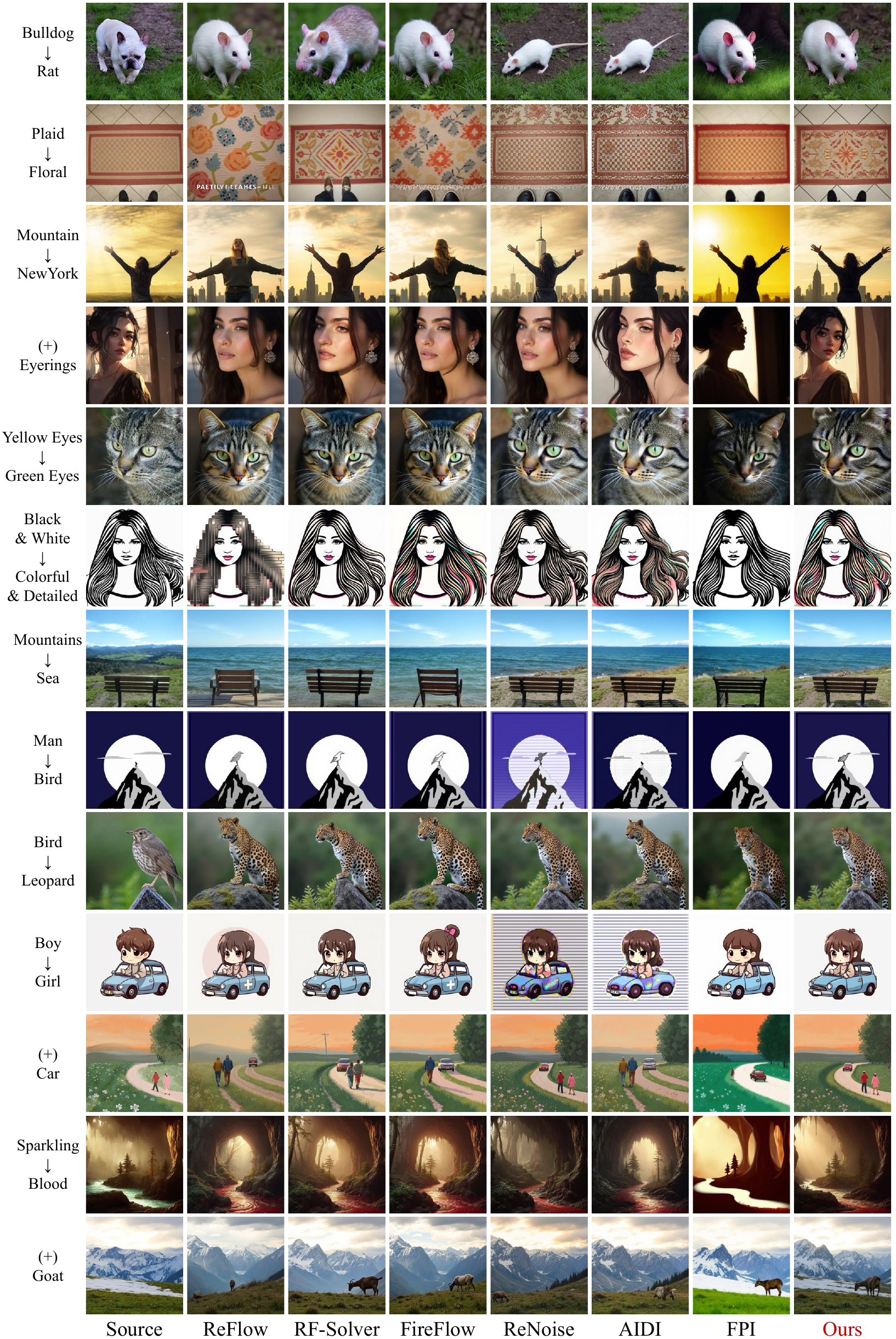}
    \caption{Additional qualitative results for {the} prompt-based image editing task.}
    \label{fig:app-additional-editing-grid}
\end{figure}

\end{document}